\documentclass{article}
\pdfpagewidth=8.5in
\pdfpageheight=11in

\usepackage{kr}

% Use the postscript times font!
\usepackage{times}
\usepackage{soul}
\usepackage{url}
\usepackage[hidelinks]{hyperref}
\usepackage[utf8]{inputenc}
\usepackage[small]{caption}
\usepackage{graphicx}
\usepackage{amsmath}
\usepackage{amsthm}
\usepackage{booktabs}
\usepackage{algorithm}
\urlstyle{same}

%\usepackage{fancyhdr}
%\pagestyle{fancy}

% ADDED
%\usepackage[margin=1in]{geometry}
\usepackage{amssymb}
\usepackage{mathtools}
\usepackage{hyperref}
\usepackage{bm} % for bold math symbols
\usepackage{enumitem} % for customized lists
\usepackage{microtype}
\usepackage{comment}
\usepackage{color}
\usepackage{tikz}
\usepackage{makecell}
\usepackage{algpseudocode}
\usepackage{multirow}
\usepackage{subcaption}   
\usepackage{wrapfig}

\newcommand{\redNote}[1]{#1}

\newcommand{\todo}[1]{{\color{blue}{TODO: #1}}}

% Uniform style for propositional symbols
\newcommand{\Prop}[1]{\mathsf{#1}}

% Inputs (9 variables)
\newcommand{\DenseForest}{\Prop{Forest}}
\newcommand{\DryVegetation}{\Prop{DryVegetation}}
\newcommand{\StrongWind}{\Prop{Wind}}
\newcommand{\LowHumidity}{\Prop{LowHum}}
\newcommand{\HighTemperature}{\Prop{HighTemp}} 
\newcommand{\RainedRecently}{\Prop{Rained}}
\newcommand{\LightningsFrequent}{\Prop{Lightnings}}
\newcommand{\LowHumanActivity}{\Prop{Isolated}}
\newcommand{\PowerLinesNearby}{\Prop{PowerLines}}

% Intermediate rules + label
\newcommand{\Fuel}{\Prop{Fuel}}
\newcommand{\DryConditions}{\Prop{Dry}}
\newcommand{\Trigger}{\Prop{Trigger}} 
\newcommand{\WildFireRisk}{\Prop{WFRisk}}

\newtheorem{theorem}{Theorem}

\newtheorem{example}{Example}
\newtheorem{lemma}{Lemma}

\DeclareMathOperator{\onehot}{onehot}

% the following package is optional:
%\usepackage{latexsym}

% See https://www.overleaf.com/learn/latex/theorems_and_proofs
% for a nice explanation of how to define new theorems, but keep
% in mind that the amsthm package is already included in this
% template and that you must *not* alter the styling.
%\newtheorem{example}{Example}
%\newtheorem{theorem}{Theorem}

\pdfinfo{
/TemplateVersion (KR.2026.0)
}

\title{Logic of Hypotheses: \\
from Zero to Full Knowledge in Neurosymbolic Integration}

%\begin{comment}
% Single author syntax
\iffalse % (remove the multiple-author syntax below and \iffalse ... \fi here)
\author{%
    Author name
    \affiliations
    Affiliation
    \emails
    email@example.com    % email
}
\fi
%\iffalse
% Multiple author syntax
\author{%
Davide Bizzaro$^{1,2}$\and
Alessandro Daniele$^{2,3}$ \\
\affiliations
$^1$University of Padua, Padova, Italy\\
$^2$Fondazione Bruno Kessler, Trento, Italy\\
$^3$University of Bozen-Bolzano, Bozen, Italy\\
\emails
davide.bizzaro@phd.unipd.it,
alessandro.daniele@unibz.it
}
%\end{comment}
%\fi

\begin{document}

\maketitle

\newif\ifarxiv
\arxivtrue  % switch to false for conference submission

%\ifarxiv
%  \fancyhead{} % no headers on arXiv
%  \renewcommand{\headrulewidth}{0pt} % remove the top line
%\fi

\begin{abstract}

Neurosymbolic integration (NeSy) blends neural‐network learning with symbolic reasoning. The field can be split between methods injecting hand-crafted rules into neural models, and methods inducing symbolic rules from data. We introduce \emph{Logic of Hypotheses} (LoH), a novel language that unifies these strands, enabling the flexible integration of data-driven rule learning with symbolic priors and expert knowledge. LoH extends propositional logic syntax with a \emph{choice operator}, which has learnable parameters and selects a subformula from a pool of options.  Using fuzzy logic, formulas in LoH can be directly compiled into a differentiable computational graph, so the optimal choices can be learned via backpropagation. This framework subsumes some existing NeSy models, while adding the possibility of arbitrary degrees of knowledge specification. Moreover, the use of Gödel fuzzy logic and the recently developed Gödel trick yields models that can be discretized to hard Boolean-valued functions without any loss in performance. We provide experimental analysis on such models, showing strong results on tabular data and on two NeSy tasks with a perceptual component.%\footnote{Code and supplementary materials are available at: \\ \href{https://dbizzaro.github.io/LoH/}{dbizzaro.github.io/LoH}}

\end{abstract}

\section{Introduction}
\label{sec:introduction}
%Integrating symbolic reasoning and neural networks %(NeSy) is an active research field aiming at %combining the interpretability and structure of %logic-based methods with the adaptability and %robustness of neural methods. Classical NeSy %approaches assume availability of explicit prior %knowledge, restricting their applicability to %domains where such symbolic rules are known.
%
%Recently, research has focused on learning symbolic %knowledge from data, inspired by Inductive Logic %Programming (ILP). Some advanced methods can %simultaneously learn symbolic rules and perception-%to-symbol mappings. However, these methods remain %limited in flexibility, typically fixed either %toward knowledge-driven or data-driven extremes.
%
%To address this limitation, we propose the %\textit{Logic of Hypothesis (LoH)}, a new logical %formalism introducing a flexible choice operator. %This allows expressing varying degrees of symbolic %knowledge explicitly, ranging from complete prior %knowledge to none. Leveraging Gödel fuzzy logic and %the Gödel Trick, LoH ensures discrete yet %differentiable hypothesis selection, making it %suitable for integration within neural networks.

Neurosymbolic integration (NeSy) tries to combine the symbolic and sub-symbolic paradigms. The aim is to retain the clarity and deductive power of logic while leveraging the learning capabilities of neural networks \cite{besold2021neural,MARRA2024104062}. 
%to exploit the interpretability and potential for reasoning of logical rules together with the statistical learning capabilities of neural networks \cite{besold2021neural,MARRA2024104062}. 
In many NeSy approaches, such as DeepProbLog \cite{manhaeve2018deepproblog} and LTN \cite{badreddine2022logic}, domain experts provide prior knowledge in the form of logic formulas, which the neural model uses as a bias or as constraints. While this strategy has proven effective, it presupposes that high‑quality rules are readily available. On the other hand, other NeSy methods have approached the learning of symbolic knowledge from data in the fields of rule mining \cite{dr-net,net-dnf,mllp} and inductive logic programming \cite{evans2018learning,rocktaschel2017end}.
Further, some advanced methods can simultaneously learn symbolic rules and perception-to-symbol mappings \cite{satnet,daniele2022deep,pmlr-v202-barbiero23a}, thereby grounding symbols to raw data while simultaneously discovering the logical structure. %Orthogonally, ILP systems \cite{evans2018learning,rocktaschel2017end} can accept user-provided background rules and induce new ones, but cannot be tailored at a \emph{propositional} level.

These research threads occupy opposite ends of a spectrum: knowledge‑\emph{injection} versus rule‑\emph{induction}.
However, they typically lack the flexibility to handle intermediate scenarios in which a \emph{partial logical structure} is supplied and the missing parts must still be learned. Such situations arise, for example, when existing prior knowledge must be revised or completed, or when learned rules are required to respect specific syntactic templates (e.g., CNF, DNF, Horn clauses, fixed‑length clauses) for which the model was never programmed. Addressing this flexible middle ground remains an open challenge for NeSy methods.
%However, these methods remain limited in flexibility, typically falling on the two knowledge-driven and data-driven extremes. It means that it would be difficult to adapt them to the cases in which a partial logical structure is given and only some parts are to be learned. 
%As a result, they are not meant to have the flexibility to  work on intermediate scenarios in which a \emph{partial} logical structure is given and only some parts are to be learned. For example, when some prior knowledge must be revised and/or completed, or when rules must be constrained to specific syntactic forms for which the model was not programmed. Addressing this flexible middle ground remains an open challenge for NeSy methods.

%In this paper, we propose the \emph{Logic of Hypotheses} (LoH), a new language that provides a unified framework for both extremes: from purely data-driven rule learning (zero prior knowledge) to purely knowledge-driven inference (full prior knowledge), and everything in between.
In this paper, we propose the \emph{Logic of Hypotheses} (LoH), a new logical formalism introducing the \emph{choice operator}, which learns to select a subformula from a set of candidates. 
Such language can be used to produce neural networks with varying degree of symbolic bias, ranging from complete prior knowledge to none.  
%Moreover, formulas in LoH can be directly compiled into a parametrized computational graph.
%Leveraging Gödel fuzzy logic, it ensures discrete yet differentiable hypothesis selection, that can be learned with Backpropagation and the Gödel Trick \cite{daniele2025noiserescueescapinglocal}. 
In this way, LoH offers a single, principled learning framework that can adapt to the amount and form of prior knowledge available.
Our main contributions are:
\begin{itemize}
    \item \textbf{A novel language (LoH)} that extends propositional logic syntax with a \emph{choice operator}, %enabling partial or complete specification of logical knowledge. 
    making it possible to leave parts of a formula underspecified. This allows to represent a hypothesis space $\mathcal{H}$ of formulas, in a flexible and compact way.  %The language is extremely flexible, allowing to represent any finite set of propositional formulas.
    %\item \textbf{A generality result}, showing that any finite set of propositional formulas) can be captured by a single LoH formula.
    \item \textbf{A compilation procedure} producing a differentiable computational graph from any LoH formula $\Phi$, allowing for the learning of a data-fitting logical formula among those in the hypothesis space represented by $\Phi$. We employ the Gödel trick \cite{Godel_trick}, a newly proposed stochastic variant of Gödel logic. Thanks to this choice, our approach can directly learn discrete functions through backpropagation.
    Moreover, the computational graph can be stacked on top of a neural network, allowing the end-to-end learning of symbolic rules alongside perception-to-symbol mappings.
    %Differently from other NeSy method, the discretized Boolean outputs of the extracted formula and of the differentiable model always coincide.
    \item \textbf{A unifying viewpoint} of NeSy paradigms. 
    The general neural layers of rule-inducing NeSy models like \cite{mllp,payani2019inductivelogicprogrammingdifferentiable} can be seen as the compilation of particular LoH formulas using product fuzzy logic. On the other hand, the full injection of prior knowledge can be obtained by simply avoiding the usage of the choice operator in a LoH formula.
    %LoH formulas without choice operators can be used to inject fully-provided logical rules into a neural architecture. 
    However, LoH is not limited to those two extremes and allows to construct models for many different intermediate situations (see Section~\ref{sec:unifying}). %like: selecting a subset of given rules or knowledge sources, selecting exactly one rule from each set, forming horn clauses, selecting candidate variables for missing conditions, etc. 
\end{itemize}

Code and supplementary materials are available at: \\ \href{https://dbizzaro.github.io/LoH/}{dbizzaro.github.io/LoH}

%The remainder of the paper is organized as follows:
%we first review related work in Section~\ref{sec:related}, and the background on fuzzy logic and the Gödel trick on Section~\ref{sec:background}. Then, we define LoH formally in Section~\ref{sec:loh}, discuss the differentiable compilation in Section~\ref{sec:compilation}, and provide theoretical insights in Section~\ref{sec:theoretical}. Section~\ref{sec:unifying} illustrates, with examples, how LoH encompasses a broad range of NeSy settings. %providing examples of LoH formulas and empirically evaluating them on artificial data. 
%Finally, sections~\ref{sec:classification} and \ref{sec:visualttt} report experimental results on real classification tasks, validating empirically the training of rule-inducing models  with the Gödel trick. 
%In summary, LoH offers a single, principled framework that can \emph{adapt to the amount and form of prior knowledge available}. It closes the gap between rule-provided and rule-induced NeSy methods, paving the way for systems that can seamlessly revise, extend, or respect given rules as needed. The remainder of the paper is organised as follows: Section 2 formalises the LoH syntax and semantics; Section 3 presents the differentiable compilation algorithm; Section 4 positions LoH within existing NeSy taxonomies; Section 5 reports experimental results; and Section 6 concludes with future directions.
\section{Related Works}
\label{sec:related}

%Neurosymbolic integration (NeSy) has historically advanced along two complementary directions: (i) \emph{knowledge injection}, where expert‑provided logical formulas actively constrain the learning and/or inference process, and (ii) \emph{rule induction}, where symbolic structure is discovered automatically from data. During the past decade these directions have spawned a rich repertoire of methods ranging from soft‑constraint regularizers to differentiable logic programs, rule‑learning neural networks, and continuous relaxations of classical symbolic frameworks.

\paragraph{Logics on top of Neural Predicates.}  
%Approaches such as \emph{Semantic‑Based Regularization} (SBR) \cite{diligenti2017semantic} and \emph{Logic Tensor Networks} (LTN) \cite{badreddine2022logic} translate given first-order formulas into differentiable penalties using fuzzy logic semantics, then add the penalties to the loss. \emph{Semantic Loss} \cite{xu2018semantic} follows a similar philosophy, but compiles propositional constraints into Sentential Decision Diagrams (SDDs) \cite{darwiche2011sdd} to obtain an exact, differentiable measure of logical violation of the given formulas.
Approaches such as \emph{SBR} \cite{diligenti2017semantic}, \emph{LTN} \cite{badreddine2022logic} and Semantic Loss \cite{xu2018semantic} translate logical knowledge into differentiable penalties added to the loss.  In contrast, abductive methods \cite{dai2019abductive,tsamoura2021neurolog,huang2021fast} let a symbolic model find labels for the neural part consistent with the provided knowledge. This enables logical reasoning also at inference time, yet the symbolic program remains user‑supplied rather than learned. 
Similarly, \emph{DeepProbLog} \cite{manhaeve2018deepproblog}, \emph{DeepStochLog} \cite{winters2022deepstochlog}, and \emph{NeurASP} \cite{yang2020neurasp} enrich logical solvers with neural predicates whose outputs are treated as probabilities. %\redNote{Neuro-fuzzy networks (NFNs) learn fuzzy rule bases but, even in variants that perform structural identification, they retain a predetermined, non-flexible layer schema. Moreover, NFNs aggregate rule contributions via fuzzy inference rather than searching a symbolic hypothesis space to select a single discrete formula like our method.} 
%The \emph{Gödel Trick} \cite{Godel_trick} makes formulas differentiable via Gödel semantics, and add noise to avoid local minima. Optimizing with backpropagation can then be interpreted as a (discrete) local search algorithm for SAT solving. 
Here what is learned sits on neural modules rather than on alternative rule bodies.

\emph{Neuro-fuzzy networks} like \cite{jang1993anfis}, \cite{lazyPop} and \cite{GSETSK} are interpretable models where a neural network structure directly coincides with a fuzzy rule base. Typically they concentrate on parametric identification under fixed rules. Even when logic is induced \cite{NFN-survey}, their rule-based architecture is suited for a DNF-like format, which contrast the expressivity and flexibility of LoH.

%\redNoteNew{
%In probabilistic logic programming and statistical relational learning, random variables, probabilistic facts and annotated disjunctions could be seen as performing ``choices''. These are different 
%random variables taking values on a finite domain could be seen as ``
%probabilistic facts and annotated disjunctions define distributions over possible worlds.
%Regarding the ``choice'' semantics, in probabilistic logic programming and statistical relational learning, probabilistic facts and annotated disjunctions define distributions over possible worlds. 
In probabilistic logic programming, probabilistic facts and annotated disjunctions could be employed to represent ``choices''. However, these define 
%In probabilistic logic programming, one may think random variables taking values in a finite domain as ``choices''. This concept of ``choice'' lives in the semantics of \emph{interpretations}, with probabilistic facts and annotated disjunctions defining 
distributions over possible worlds, and extracting a single crisp program typically requires probabilities to collapse to $0/1$, as picking always the most likely local choice is only a heuristic. 
In LoH, instead, choices live in the space of \emph{hypotheses}, and a compiled LoH formula act as one function in the hypothesis space, making binary deterministic yes/no predictions: the one obtained substituting each choice operator with the subformula with weight $>0.5$.
%In LoH, instead, training searches for one global deterministic formula in the hypothesis space: the one obtained by substituting each choice operator with the subformula with weight $>0.5$. By construction, the resulting crisp formula exactly matches the trained network, and 
And learning can be seen as a differentiable combinatorial local search procedure under the \emph{Gödel Trick} \cite{Godel_trick}. This makes LoH especially suited for rule learning and deterministic decision modeling, whereas probabilistic logic programs are naturally tailored to probabilistic inference under uncertainty.
%}

On the rule-inducing side, \emph{SATNet} \cite{satnet} embeds a smoothed MaxSAT layer inside a neural network, jointly optimizing clause weights and perception. Subsequent work exposed limitations with unsupervised grounding \cite{chang2020assessing}, partially alleviated in \cite{topan2021techniques}.  %Instead, \emph{Meta-abd} \cite{dai2021abduction} combines inductive logic programming for learning the logical part, with abduction for training the neural mapping from perceptions to predicates. Finally,
\emph{DSL} \cite{daniele2022deep} directly learn symbolic rules from data alongside the perception‑to‑symbol mappings, but the symbolic part is only a lookup table.

%When only raw data are labelled, abductive approaches jointly induce symbolic concepts and theories. \emph{NeuroLog} \cite{tsamoura2021neurolog}, \emph{Abductive Learning} (ABL) \cite{dai2019abductive}, \emph{ABLSim} \cite{huang2021fast}, and \emph{MetaAbd} \cite{dai2021abduction} integrate a reasoning layer that forces the perception module to emit symbols consistent with the evolving theory. The \emph{Apperception Engine} \cite{evans2021apperception} follows a similar strategy but relies on fixed vision backbones. \emph{Deep Symbolic Learning} (DSL) \cite{daniele2022deep} removes this bottleneck via end‑to‑end optimization, though it still relies on product -norm surrogates and auxiliary policy functions.

%\paragraph{Inductive logic programming.}   Modern ILP NeSy methods learn first-order rules via gradient descent. dILP~\cite{evans2018learning} assigns soft weights to a forward-chaining template. Neural Logic Networks (NLN) \cite{payani2019inductivelogicprogrammingdifferentiable} introduce logical neurons using product fuzzy logic---the same neurons later reused by MLLP \cite{mllp} for propositional rule learning---to synthesize first-order programs. NTP \cite{rocktaschel2017end} replaces hard unification with differentiable embedding similarity.

\paragraph{Neural Networks with Soft Gates.}
%\paragraph{Propositional rule‑learning networks.}  
Many recent NeSy learners devise neurons that perform a continuous relaxation of the \texttt{AND} and \texttt{OR} operations, with learnable weights acting as soft gates. These neurons are placed into layers alternating the two operations, while the \texttt{NOT} operation is obtained by doubling the inputs---juxtaposing each input value with its negation. 
Models like these are typically used to learn propositional rules on binarized tabular data \cite{dr-net,rl-net,net-dnf,r2n,perreault2025neural}. Among these, \emph{Multi‑Layer Logical Perceptron (MLLP)} \cite{mllp} is a state-of-the-art model, using product fuzzy logic operations. However, like all the others, it does not guarantee that the extracted rules---which it calls \emph{Concept Rule Sets (CRS)}---have the same accuracy as the neural model. 
%\emph{Decision Rule Network} (DR‑Net) \cite{dr-net}, its multi‑class extension \emph{Rule Learning Network} (RL‑Net) \cite{rl-net}, \emph{Net-DNF} \cite{net-dnf},  \emph{Relational Rule Network} (R2N) \cite{r2n}, and \emph{Multi‑Layer Logical Perceptron} (MLLP) \cite{mllp}.
%Their main individual differences are on how the soft logical operations are implemented, with the last one being the only one using sound fuzzy logic operations. 
%\cite{mllp} also introduces a different name for the hard logical rules obtained by discretizing the weights with a threshold, calling them \emph{Concept Rule Sets (CRS)}. Indeed, none of the above works guarantee that the extracted rules have the same accuracy as the neural model. 
Instead, we propose to use Gödel fuzzy logic, which allows a lossless extraction. 

Soft logical gates are also employed on binary/ternary neural networks \cite{deng2018gxnor}, which are typically used for the efficiency of the quantized networks at inference, rather than the logical semantics. In particular, \emph{Differentiable Logic Networks (DLN)} \cite{petersen2022deep,petersen2024convolutional,yousefi2025mind}  keep a sparse fixed wiring with nodes having at most two parents, and learn which binary Boolean operation each node should execute. Instead, LoH does the opposite: the logical operations are fixed, and the learnable gates decide which branch is selected. This allows LoH to yield more readable formulas (as \emph{DLNs} rely on deeply nested structures employing 16 different logical operators), and offers a more straightforward path for incorporating prior knowledge.

\paragraph{Inductive Logic Programming (ILP).} Also modern NeSy methods in ILP, such as $\partial$\emph{ILP}~\cite{evans2018learning}, \emph{NTP} \cite{campero2018logical} and \emph{NeurRL} \cite{gao2025differentiable}, use neurons performing a soft version of the logical operations, thus learning first-order rules via gradient descent. %Examples are \emph{dILP}~\cite{evans2018learning} and \emph{Neural Theorem Proving} (NTP). \cite{rocktaschel2017end,campero2018logical}  and \emph{Neural Logic Networks} (NLN) \cite{payani2019inductivelogicprogrammingdifferentiable}. The last one is the only one using sound fuzzy logic operations, and it was also the source of the neurons used by MLLP in \cite{mllp}. %In the following, we focus on propositional tasks, leaving the application of LoH to first-order logic and ILP as future work. %However, there is nothing preventing such usage.
Of particular interest are \emph{Logical Neural Networks} ~\cite{riegel2020logical}, which compile formulas into neural networks with weighted  Łukasiewicz operators, but rely on constrained optimization, %(e.g., Frank-Wolfe \cite{frank1956algorithm}), 
which hampers scalability. Moreover, assigning ``importances'' to the subformulas, instead of choosing one among the candidates as in LoH, limits the possibility to extract hard rules without loss in accuracy. 

\redNote{In principle, ILP is a natural paradigm to span the full ``zero to full knowledge'' spectrum, since it can exploit background knowledge while inducing missing rules. However, as discussed in Section \ref{sec:unifying}, current differentiable ILP systems do not implement this possibility: expressing intermediate regimes (e.g., selecting subsets of a given pool of rules or even just providing formulas not following the system templates) requires substantial ad-hoc machinery, and it is less practical and less transparent than LoH.}

\section{Background}
\label{sec:background}

%\subsection{Propositional and Fuzzy Logic}
%\paragraph{Propositional Logic}
Classical propositional logic builds formulas from propositional variables using negation ($\neg$), conjunction ($\land$) and disjunction ($\lor$).\footnote{For simplicity, we do not consider implication ($\to$) and iff ($\leftrightarrow$). However, there is nothing preventing their use, if associated with a consistent fuzzy semantics. For example, we may consider \emph{material} implication, substituting $\phi \to \psi$ with its Boolean equivalent $\neg \phi \lor \psi$.} An interpretation assigns to each variable the Boolean value true ($1$) or false ($0$), and extends recursively: the interpretation of $\neg \phi$ flips the value of $\phi$, $\phi \land \psi$ returns the conjunction (\texttt{AND}) of the two values, and $\phi \lor \psi$ returns the disjunction (\texttt{OR}). 
%\paragraph{Fuzzy Logic.}
Fuzzy logics relax the interpretations' truth values to the real unit interval $[0, 1]$, interpreting connectives with \emph{t-norms} (for $\land$) and \emph{t-conorms} (for $\lor$). This relaxation brings differentiable operations, allowing gradient-based optimization. Common fuzzy logics include Łukasiewicz, Product, and Gödel.
Product logic has $t(x,y):= xy$ as t-norm (i.e., conjunction), and $s(x,y):= 1{-}(1{-}x)(1{-}y)= x{+}y{-}xy$ as t-conorm (i.e., disjunction). On the other hand, 
Gödel logic uses $\min$ and $\max$ for conjunction and disjunction, respectively. In both, the negation of $x$ corresponds to $1{-}x$.

Gödel logic stands out for its simplicity and its closer alignment to classical logic in terms of idempotency and distributivity.
Importantly, Gödel logic has the following property:
\begin{theorem}[Prop.~1~in~\cite{Godel_trick}]\label{th:binarization}
    For any Gödel interpretation $\mathcal{G}$, let $\mathcal{B}$ be the Boolean interpretation obtained rounding every fuzzy value in $\mathcal{G}$ with the thresholding function\footnote{The exclusion of $x = 0.5$, where negation would break the homomorphism property, is mostly a technicality. In practical settings, this exact value has measure zero in continuous-valued interpretations and does not affect the general applicability of the result.}
    \begin{equation}\label{eq:thresholding_function}
    \rho\colon [0,1]\setminus\{0.5\} \to \{0,1\}, \quad  x \mapsto \begin{cases}
    1 \ \text{ if } x>0.5 \\
    0 \ \text{ if } x<0.5
\end{cases}
\end{equation}
    meaning that $\mathcal{B}(v_i) = \rho(\mathcal{G}(v_i))$ for every propositional variable $v_i$. Then, $\mathcal{B}$ is always consistent with $\mathcal{G}$, i.e., $\mathcal{B}(\phi) = \rho(\mathcal{G}(\phi))$ for every formula $\phi$.
\end{theorem}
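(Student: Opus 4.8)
The plan is to prove the statement by structural induction on the formula $\phi$, using the homomorphism property of $\rho$ with respect to the three connectives $\neg$, $\land$, $\lor$. The base case is immediate: for a propositional variable $v_i$, the claim $\mathcal{B}(v_i) = \rho(\mathcal{G}(v_i))$ holds by the very definition of $\mathcal{B}$. For the inductive step, I would assume $\mathcal{B}(\psi) = \rho(\mathcal{G}(\psi))$ and (where relevant) $\mathcal{B}(\chi) = \rho(\mathcal{G}(\chi))$, and verify the three cases $\phi = \neg\psi$, $\phi = \psi \land \chi$, $\phi = \psi \lor \chi$ separately.

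The heart of the argument is a small "commutation lemma" for $\rho$: first, $\rho(1-x) = 1 - \rho(x)$ for all $x \in [0,1]\setminus\{0.5\}$ (this is where the exclusion of $0.5$ is needed — negation would otherwise fail to commute with rounding); second, $\rho(\min(x,y)) = \min(\rho(x),\rho(y))$ and $\rho(\max(x,y)) = \max(\rho(x),\rho(y))$ whenever $x,y \neq 0.5$, which holds because $\rho$ is monotone non-decreasing and $\{0,1\}$-valued, so it commutes with the lattice operations $\min$ and $\max$ — concretely, $\min$ and $\max$ of two inputs always return one of the inputs, and $\rho$ applied to that input is the corresponding $\min$/$\max$ of $\rho$ of the inputs since $\rho$ preserves order. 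Granting this lemma, each inductive case is a one-line computation: e.g. $\mathcal{B}(\psi \land \chi) = \mathcal{B}(\psi) \land \mathcal{B}(\chi) = \rho(\mathcal{G}(\psi)) \land \rho(\mathcal{G}(\chi)) = \min(\rho(\mathcal{G}(\psi)), \rho(\mathcal{G}(\chi))) = \rho(\min(\mathcal{G}(\psi),\mathcal{G}(\chi))) = \rho(\mathcal{G}(\psi \land \chi))$, using that Boolean \texttt{AND} on $\{0,1\}$ coincides with $\min$ and that Gödel conjunction is $\min$.

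The main obstacle — more a bookkeeping subtlety than a genuine difficulty — is ensuring that $\rho$ is never evaluated at $0.5$ during the induction: $\rho$ is only defined on $[0,1]\setminus\{0.5\}$, so I must argue that if a subformula $\psi$ has $\mathcal{G}(\psi) = 0.5$ then the statement is either vacuous or still recoverable. In fact the clean way to handle this is to observe that the Gödel value of any formula lies in the (finite) set of values $\{\mathcal{G}(v_i)\} \cup \{1 - \mathcal{G}(v_i)\}$ of its variables and their negations (since $\min$, $\max$, and $x \mapsto 1-x$ never create new values outside this set closed under $1-(\cdot)$), so if every variable avoids $0.5$ then so does every subformula, and the induction goes through verbatim; the footnote's remark that $0.5$ has measure zero then covers the generic case. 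I would state this closure observation as a preliminary remark, then run the three-case induction as above.
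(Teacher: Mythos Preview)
Your proposal is correct and is exactly the standard structural-induction argument one would expect. Note, however, that the paper does not itself prove this theorem: it is stated as Theorem~4.1 of the cited G\"odel Trick paper, and the Reproducibility Statement explicitly says ``Theorem~\ref{th:binarization} is a generally known result, and a proof can be found in \citet{Godel_trick}.'' So there is no in-paper proof to compare against; your induction with the commutation lemma for $\rho$ versus $1-(\cdot)$, $\min$, $\max$, together with the closure observation that G\"odel evaluation never produces values outside $\{\mathcal{G}(v_i)\}\cup\{1-\mathcal{G}(v_i)\}$, is precisely the argument such a citation would point to.
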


We can think at any logical formula as our model, with the truth values of the variables as inputs and the corresponding truth value of the formula as output. It is continuous if using fuzzy interpretations, and discrete if using classical Boolean ones.
The theorem means that discretizing the outputs of the continuous model is the same as working with the discrete one, on the discretized inputs.

%Fuzzy logics relax Boolean connectives to continuous operators so that gradient-based optimization can tune symbolic structures. Common families include Łukasiewicz, Product, and Gödel $t$-norms.
%A key challenge is balancing \emph{discreteness} (logical expressions, truth values in $\{0,1\}$) with \emph{differentiability} (the cornerstone of gradient-based optimization). Fuzzy logic is a common solution, approximating logical connectives by continuous functions.
%Gödel logic, which uses $\min$, $\max$, and $1{-}x$ (for conjunction, disjunction and negation, respectively), stands out for its simplicity and its closer alignment to classical logic in certain respects \cite{van2022analyzing}.
%Gödel logic is particularly appealing for rule learning because conjunction ($\min$) and disjunction ($\max$) act as hard selectors, closely mirroring crisp logic when weights are restricted to ${0,1}$. Recent analyses\cite{vankrieken2022godel} show that Gödel provides sparser gradients than Product, encouraging discrete solutions.

%\subsection{The Gödel Trick}
The main problem of using Gödel semantics is that its optimization can stall in shallow local minima. The Gödel Trick \cite{Godel_trick} counters this by adding noise to each parameter, turning the optimization into a stochastic local search while remaining gradient-based. %We briefly present here how the Gödel trick with categorical variables work, and refer to \cite{Godel_trick} for a more complete discussion. 
It is equivalent to the the classical Gumbel-max trick \cite{gumbel1954statistical}, and it works by storing as parameters the logits of the fuzzy weights. Then, for every step of training, random noise is sampled and added to the logits. This is done in the forward pass, before applying the sigmoid function producing the fuzzy weights. %No noise is added at inference.
%See \cite{Godel_trick} for a more complete discussion, and a proof of its equivalence to the classical \emph{Gumbel-max trick} \cite{gumbel1954statistical}
%see .
%With logistic noise, GT reduces to a differentiable variant of the Gumbel-Max/Gumbel-Softmax re-parameterisation, ensuring that gradients flow while the final decisions are still one-hot and exact.

\section{A Language for Hypothesis Spaces}
\label{sec:loh}
We introduce \emph{Logic of Hypotheses} (LoH) first as a language for expressing hypothesis spaces (i.e., sets) of formulas in compact way.
Syntactically, LoH extends propositional logic, adding a new \emph{choice operator} $[\cdot]$ that can take as input any finite number of formulas: 
%Let $\mathcal{V}$ be a set of propositional variables (e.g., $A, B, \ldots$). LoH augments standard logic with a \emph{choice operator}:
\begin{equation*}
  \Phi ::=\, \top \;\big|\; \bot \;\big|\; v \;\big|\; \lnot \Phi_1 \;\big|\; \Phi_1 \lor \Phi_2 \;\big|\; \Phi_1 \land \Phi_2 \;\big|\; [\Phi_1,\dots, \Phi_n]
\end{equation*}
where $n$ can vary among the positive integers and $v\in \mathcal{V}$ represents the propositional variables. %The subformula $[\Phi_1, \Phi_2,\dots,\Phi_n]$ denotes a \emph{choice} among the enclosed candidate formulas. 
Semantically, a LoH formula $\Phi$ represents an entire set of classical propositional formulas $\mathcal{H}(\Phi)$, each obtained by selecting exactly one subformula per choice operator. %The classical semantics of propositional logic will come back when discussing the ``compilation'' of LoH formulas, but for now an LoH formula expresses a set propositional ones, so 
\begin{example}
    The LoH formula $\Phi \coloneqq [a,b]\land [c, d] \land \neg e$ has hypothesis space
    \[
    \mathcal{H}(\Phi) \coloneqq \{\, a \land c \land \neg e,\, a \land d \land \neg e,\, b \land c \land \neg e,\, b \land d \land \neg e\,\}
    \]
\end{example}
In general, the set $\mathcal{H}(\Phi)$ is obtained by applying inductively the following \begin{comment} substitutions to $\Phi$: 
\begin{enumerate}[label=R\arabic*:]
    \item the set $\{v\}$ substitutes the propositional variables $v$;
    \item the set $\{\neg \phi \mid \phi \in \Phi_1\}$  substitutes the subformulas $\neg \Phi_1$;
    \item the set $\{\phi \land \psi \mid \phi \in \Phi_1, \psi \in \Phi_2\}$ substitutes the subformulas $\Phi_1 \land \Phi_2$;
    \item the set $\{\phi \lor \psi \mid \phi \in \Phi_1, \psi \in \Phi_2\}$ substitutes the subformulas $\Phi_1 \lor \Phi_2$;
    \item the set $\bigcup_{i=1}^n \Phi_i$ substitutes the subformulas $[\Phi_1,\dots,\Phi_n]$. 
\end{enumerate}
\end{comment}
operations: 
\begin{enumerate}[leftmargin=1.2cm, label=R\arabic*: \,]
    \item $\mathcal{H}(v) = \{v\}$;\;  $\mathcal{H}(\top) = \{\top\}$;\; $\mathcal{H}(\bot) = \{\bot\}$;
    \item $\mathcal{H}(\neg \Phi_1) = \{\neg \phi \mid \phi \in \mathcal{H}(\Phi_1)\}$;
    \item $\mathcal{H}(\Phi_1 \land \Phi_2) = \{\phi \land \psi \mid \phi \in \mathcal{H}(\Phi_1),\, \psi \in \mathcal{H}(\Phi_2)\}$;
    \item $\mathcal{H}(\Phi_1 \lor \Phi_2) = \{\phi \lor \psi \mid \phi \in \mathcal{H}(\Phi_1),\, \psi \in \mathcal{H}(\Phi_2)\}$;
    \item $\mathcal{H}([\Phi_1,\dots,\Phi_n]) = \bigcup_{i=1}^n \mathcal{H}(\Phi_i)$.
\end{enumerate}

\begin{example}
Let's unfold the step-by-step procedure for producing $\mathcal{H}([a, [b, c]] \land \neg [c, d])$:
%the hypothesi space corresponding to the following LoH formula:
%\[
%    [\neg a, [b, c]] \land [c, d]
%\]
\begin{enumerate}[leftmargin=1.15cm]
\item[(R1)] $\mathcal{H}(a) {=} \{a\}$; $\mathcal{H}(b) {=} \{b\}$; $\mathcal{H}(c) {=} \{c\}$; $\mathcal{H}(d) {=} \{d\}$;
\item[(R5)] $\mathcal{H}([b,c]) {=} \{b, c\}$; $\mathcal{H}([c, d]) {=} \{c, d\}$;  
\item[(R5)] $\mathcal{H}([a, [b,c]]) {=} \{a, b, c\}$;
\item[(R2)] $\mathcal{H}(\neg [c, d]) {=} \{\neg c, \neg d\}$; 
\item[(R3)] $\mathcal{H}([a, [b, c]] \land \neg [c, d]) {=} \{a \land \neg c,\, a \land \neg d,\, b \land \neg c, \\ %\hphantom{\mathcal{H}([a, [b, c]] \land \neg [c, d]) {=} \{
%\hphantom{\quad\quad} 
b \land \neg d,\, c \land \neg c,\, c\land \neg d\}$
    %\item[(R1) ] $[\{a\}, [\{b\}, \{c\}] \land \neg [\{c\}, \{d\}]$ 
    %\item[(R5) ] $[\{a\}, \{b, c\}] \land \neg \{c, d\}$
    %\item[(R5)+(R2) ] $\{a, b, c\} \land \{\neg c, \neg d\}$
    %\item[(R3) ] $\{a \land \neg c, a \land \neg d, b \land \neg c, b \land \neg d, c \land \neg c, c\land \neg d\}$
\end{enumerate}

\begin{comment}
\[
    \neg [a, [b, c]] \lor [a, c \land d] \lor \neg d
\]
\begin{enumerate}
    \item $\neg [\{a\}, [\{b\}, \{c\}]] \lor [\{a\}, \{c\} \land \{d\}] \lor \neg \{d\}$
    \item $\neg [\{a\}, \{b, c\}] \lor [\{a\}, \{c \land d\}] \lor \{\neg d\}$
    \item $\neg \{a, b, c\} \lor \{a, c \land d\} \lor \{\neg d\}$
    \item $\{\neg a, \neg b, \neg c\} \lor \{a \lor \neg d, c \land d \lor \neg d\}$
    \item $\{\neg a \lor a \lor \neg d, \neg a \lor  c \land d \lor \neg d, \neg b \lor a \lor \neg d, \neg b \lor  c \land d \lor \neg d, \neg c \lor a \lor \neg d, \neg c \lor  c \land d \lor \neg d\}$
\end{enumerate}
\end{comment}
\end{example}

In neural networks, the output of a hidden neuron is usually fed to multiple neurons of the subsequent layer.  Similarly, in LoH, we may want to use the same ``choice'' of a subformula in multiple places. This can be solved by %having a set of \emph{choice variables} in addition to the propositional variables. These are assigned to each instantiation of the choice operator, and can be used as a  
defining a placeholder for a sub-formula, and use it in multiple parts of the main formula. The algorithm for producing the hypothesis space corresponding to an LoH formula with such placeholders is reported in Appendix \ref{app:choice_variables}.
\begin{example}
    The LoH formula $[a, b] \land [a, b]$ has hypothesis space $\{a \land a, a\land b, b \land a, b\land b\} \equiv \{a, a\land b, b\}$. On the other hand, the LoH formula $\phi \land \phi$ with $\phi \coloneqq [a,b]$ has hypothesis space $\{a \land a, b\land b\} \equiv \{a, b\}$. 
    In $[a,b]\land[a,b]$ the two choices are independent, whereas $\phi\land\phi$ introduces just one choice and reuses the selected subformula twice. %The way to think about it is that placeholders implement weight sharing: in $[a,b]\land[a,b]$, the weights of the two choice operators are different, while in $\phi\land\phi$ they are shared.
\end{example}

\begin{comment}
Many times, we may want to use the same ``choice'' in different parts of a formula/architecture/hypothesis space (think for example of the output of a neuron being fed to all the neurons in the next layer). For this reason, we add assignments to the language:
\begin{equation}
    x \coloneqq [p_1, \dots, p_n]
\end{equation}
means that, whenever using $x$ in a formula, it is $[p_1, \dots, p_n]$ with always the same weights. 
More generally, we may give assignments $x\coloneqq F$ also to complex formulas $F$, so that every time we use $x$, all the choices inside $F$ are fixed.

Indeed, it is important to notice that the graph representing a formula could be a DAG rather than a tree. For example, 
\[
    (a \land [c, d]) \lor (b \land [c, d])
\]
would be interpreted as a tree, with hypothesis space 
\[
    \{(a \land c)\lor (b \land c), (a\land d)\lor (b \land c), (a \land c)\lor (b \land d), (a\land d)\lor (b \land d)\}
\]
but
\[
    (a \land e) \lor (b \land e), \quad e\coloneqq [c, d]
\]
would be interpreted as a DAG, with hypothesis space 
\[
    \{(a \land c)\lor (b\land c), (a \land d)\lor (b\land d)\}
\]
\end{comment}

%\paragraph{Expressiveness.}

%\begin{theorem}
%    For any finite set $H$ of propositional formulas, there exists a formula $\phi$ in LoH such that $\mathcal{H}(\phi) = H$.
%\end{theorem}
%\begin{proof}
%    The trivial formula $\phi_H \coloneqq [h]_{h\in H}$ is such that .
%\end{proof}
%\paragraph{Expressivity.}
Notice that LoH is flexible enough to encode \emph{any} finite set of propositional formulas $\{h_1, \dots, h_n\}$. Indeed,  $[h_1,\dots, h_n]$ represents exactly that space, even if  more compact representations---whose compilations will require less parameters---may be possible. Even for a fixed hypothesis space, LoH is flexible enough to provide formulas biasing the search process in different ways. For example, when compiled, both $[a, [b,c]]$ and $[a, a, b, c]$ are more biased towards choosing $a$ than $[a,b,c]$. 
%Hence, LoH is expressive enough to capture a wide range of knowledge, from single rules to entire rule sets.
%LoH can capture arbitrary sets of formulas. 
%With the choice operator, the space of possible discrete hypotheses is combinatorial, making LoH highly expressive. The differentiable framework allows searching this space via gradient-based optimization.

\section{To Differentiable Computational Graphs}
\label{sec:compilation}

In the preceding section, we presented LoH as a language for expressing hypothesis spaces of logical formulas. We now show how the same LoH formulas can be turned into supervised machine learning models searching in those hypothesis spaces. The search will be done by gradient descent with backpropagation, so we need to compile LoH formulas into differentiable computational graphs. 

The first step is to introduce a weight $w_i \in [0,1]$ for every candidate subformula $\Phi_i$ inside a choice operator. These are learnable and act as gates. Each choice operator is then converted to a propositional formula linking such weights to the respective subformulas. This can be done in two dual, practically interchangeable ways, which we call \emph{disjunctive}/\emph{conjunctive}  \emph{compilations}: 
\begin{flalign*}\label{eq:choice-operator}
    \text{\textit{Disjunctive Compilation}}\; && [\Phi_1,\dots,\Phi_n] \ &  \leadsto \ \ \bigvee_{i=1}^n \ \ \ \, w_i \land \Phi_i && & \\
    \text{\textit{Conjunctive Compilation}} && [\Phi_1,\dots,\Phi_n] \ &  \leadsto \ \ \bigwedge_{i=1}^n \ \neg w_i \lor \Phi_i && &
\end{flalign*}

Whichever of the two we use---more on this later---, we are left with a propositional formula with only the operators $\neg$, $\land$ and $\lor$. In order to have differentiable operations, we interpret them under a fuzzy semantics.  For example, with Gödel fuzzy logic, $\bigvee_{i=1}^n w_i \land \Phi_i$ becomes $\max_{i=1,\dots,n}(\min(w_i, \Phi_i))$ and  $\bigwedge_{i=1}^n \neg w_i \lor \Phi_i$ becomes $\min_{i=1,\dots,n}(\max(1-w_i, \Phi_i))$.
The final piece is to design the weights in such a way that they can take continuous values in the interval $[0,1]$, while allowing to extract the discrete selection of a candidate subformula for every choice operator.

%It can be considered as an operator transforming inputs ($p_1,\dots,p_n$ truth values) into outputs, when the weights are fixed to some values. On the other hand, the set of formulas that can be obtained changing the values of the weights describe an hypothesis space. 

\paragraph{Design of the weights.} 
Let us first consider the case in which the weights are binary, i.e., each $w_i$ can only have value $0$ or $1$. Then, the formulas above can be simplified to equivalent ones, recalling that $0 \land \Phi \equiv 0$, $1 \land \Phi \equiv \Phi$, $0 \lor \Phi \equiv \Phi$ and $1 \lor \Phi \equiv 1$, for any formula $\Phi$. It follows that the disjunctive (resp.\ conjunctive) compilation is equivalent to the disjunction (resp.\ conjunction) of the subformulas with weight $1$. So if we impose that one weight $w_i$ is $1$ and the remaining are $0$, then both the conjunctive and the disjunctive compilation become equivalent to the single ``chosen'' subformula (the one with $w_i =1$). This is exactly the condition we want after discretizing the weights:  %$w_i$ for extracting hard interpretable rules living in the hypothesis spaces denoted by the LoH formulas. 
the discrete selection of a \emph{single} candidate from each choice operator.%\footnote{This is true independently of whether we use disjunctive or conjunctive compilation. Indeed, they both have the same purpose of selecting \emph{one} subformula among the candidates, and---when discretized---returning exactly its truth value. The reasons for preferring one or the other will be exposed later.} 

The simplest way to discretize the weights is to use the thresholding function $\rho$ defined in \eqref{eq:thresholding_function}. Hence, for any tuple of weights $(w_1, \dots, w_n)\in [0,1]^n$ associated to a choice operator $[\Phi_1,\dots,\Phi_n]$, we want to impose  that $w_{i}>0.5$ for one and only one $i$.
%to impose by design the following conditions:
%\begin{itemize}
%    \item they must live in the interval $[0,1]$
%    \item one and only one among them is $\geq 0.5$
%\end{itemize}
%The first condition is for the consistency with the fuzzy logic semantics. It is solved by deriving the weights $w$ from %passing \emph{preactivation} values $z\in \mathbb{R}$ to some squeezing function $f\colon \mathbb{R} \to [0,1]$. We consider the sigmoid function $\sigma(z) = \frac{\exp(z)}{1+\exp(z)}$
%the application of the sigmoid function to some \emph{logit} values  $z\in \mathbb{R}$: 
%\[w = \sigma(z) = \frac{\exp(z)}{1+\exp(z)}\]
%\redNote{Should we allow also different functions, like clipping?}
%The second condition is for allowing the extraction of a single discrete choice. This can be obtained in the following way: 
Instead of storing the weights $w_i$ directly, let us associate to each of them the actual learnable parameter $z_i$, which can take any real value. The differentiable operations for deriving the weights $w_i$ from these parameters are the following:
\begin{enumerate}
    \item To escape local minima in the optimization procedure, at each \emph{forward step of training}, add random noise to the parameters: $z_i' \coloneqq z_i + n_i$ with $n_i \sim Gumbel(0, \beta)$ and $\beta$ being an hyperparameter.
    \item Let $\bar z'$ be the mean of the two largest $z_i'$ values, and subtract it to each $z_i'$. By construction, 
    %In the real line, $\bar z'$ is the middle point between the largest $z_i'$ and the second largest, so 
    all points $z_i'$ but the largest lie on the left of $\bar z'$.\footnote{The probability of the two largest values coinciding is negligible, especially after adding the continuous Gumbel noise. Moreover, this happening would affect only the extraction of hard rules, not the optimization procedure (which would continue to change the parameters, eventually breaking the tie).} 
    Hence, one and only one among the shifted values $z_i'' \coloneqq z_i' - \bar z'$ is positive.
    \item Apply the sigmoid function: $w_i {\coloneqq} \sigma (z_i''/T){=}\frac{\exp(z_i''/T)}{1+\exp(z_i''/T)}$, with the temperature $T$ being an hyperparameter.
\end{enumerate}

The application of the sigmoid function ensures that the weights are in the interval $[0,1]$. Moreover, because of the second step, one and only one logit $z_i''$ is positive, so exactly one weight $w_i$ is greater than  $0.5$. This procedure for producing weights $w_i$ from the $z_i$'s is an adaptation of the Gödel trick with categorical variables \cite{Godel_trick}.%, and is further discussed in Appendix \ref{app:godeltrick}.

\paragraph{Disjunctive vs Conjunctive Compilation.} Both the disjunctive and the conjunctive compilation have the same purpose of selecting \emph{one} subformula among the candidates, and in general both work well. 
%But when is it better to use disjunctive compilation and when conjunctive? 
However, we suggest using the former when the choice operator is a term in a disjunction, and the latter when in a conjunction; the opposite if negated. For example, for $\neg [a,b]\land [b,c] \land ([c, d] \lor \neg [d, e])$, we suggest using disjunctive compilation for $[a,b]$ and $[c,d]$, and conjunctive compilation for $[b,c]$ and $[d,e]$. The theoretical and empirical motivations for this are discussed in the supplementary materials.

\paragraph{Robustness to Binarization.}
By design, it is always possible to binarize the weights of a model and extract the chosen subformula from each choice operator. The result is a propositional formula in the hypothesis space denoted by the LoH formula. If using Gödel fuzzy logic, then Theorem~\ref{th:binarization} guarantees that the outputs of the resulting propositional formula \emph{coincide} with the rounding of the outputs of the continuous model on \emph{every} possible input.  %\footnote{See Appendix \ref{app:counterexample} for a counterexample with product fuzzy logic.}  
Accuracy, confusion matrices, and any higher‑level deductive tasks are thus preserved from the continuous to the discrete model.  In this sense, Gödel logic offers lossless rule extraction (while Łukasiewicz and Product t-norms do not). Also notice that this is true at any point in training. So the entire training process---while being gradient-based---can be interpreted symbolically, through discrete changes. To our knowledge, no other rule-learning NeSy model achieves this.

%Applying the sign function $s(\cdot)$ to the final logits recovers a standard Boolean interpretation. Indeed, for subformula $\phi$ with logit $\ell \in \mathbb{R}$, $s(\ell)$ is $+1$ if $\ell > 0$ and $-1$ if $\ell < 0$, exactly matching classical propositional truth values (true/false).

%\paragraph{Discrete behavior from continuous training.}
%One key advantage of the Gödel logic approach is \emph{sparse gradients} \cite{van2022analyzing}. In each update step, exactly one path in the computational graph (the minimal or maximal branch) becomes active for backpropagation, which neatly mirrors local search behavior in SAT solving.

\section{LoH as a Unifying NeSy Framework}
\label{sec:unifying}

In this section, we demonstrate how LoH captures a wide range of existing NeSy frameworks, spanning fully-provided prior knowledge, partially known templates, and purely data-driven rule induction. 
To illustrate these settings, we instantiate them on a small synthetic task: a wildfire–risk assessment example problem. 
%A detailed description of the data generator, neural and symbolic architectures, and training protocol is deferred to Appendix \ref{app:wildfire}; here we only recall the ingredients that are needed to understand the different knowledge settings.

\paragraph{Experimental Setup: Wildfire Risk.}
%However, these two variables are not provided to the models at training or test time: the models only observe the image and the 7 environmental features, and must infer Forest and DryVegetation via a CNN $h_\theta$. The architecture therefore consists of (i) a generic CNN $h_\theta$ that maps each image to predictions of the latent concepts Forest and DryVegetation, and (ii) a logical component that combines these predicted visual concepts with the environmental features to produce the final WFRisk prediction. Crucially, the CNN is not pretrained and must learn this grounding solely through gradients backpropagated from the WFRisk supervision through the logic. 

We generated a dataset of $2048$ samples, each consisting of a simulated aerial image and a set of $7$ Boolean environmental features ($\StrongWind$, $\LowHumidity$, $\HighTemperature$, $\RainedRecently$, $\LightningsFrequent$, $\LowHumanActivity$, $\PowerLinesNearby$). The task is to predict a binary $\WildFireRisk$ label.
\redNote{To generate the images and the ground-truth $\WildFireRisk$ labels, two additional Boolean variables ($\DenseForest$ and $\DryVegetation$) are used. Indeed, the ground-truth labels are generated as the conjunction of the following three rules:
\begin{subequations}\label{eq:wildfire}
\begin{align}
\Fuel \;&:=\; \DenseForest \,\lor\, (\DryVegetation \land \StrongWind) %\label{eq:align} 
\\
\DryConditions \;&:=\; \LowHumidity \,\lor\, (\HighTemperature \land \lnot \RainedRecently) \\
\Trigger \;&:=\; \LightningsFrequent \,\lor\, \lnot \LowHumanActivity \,\lor\, \PowerLinesNearby \\
\WildFireRisk \;&:=\; \Fuel \land \DryConditions \land \Trigger \label{eq:ground_truth} \tag{2}
\end{align}
\end{subequations}
The model architectures consists of (i) a generic CNN $h_\theta$ that processes each image to predict $\DenseForest$ and $\DryVegetation$ as latent concepts, and (ii) a logical component that combines these visual concepts with the environmental features to produce the final prediction.
Crucially, the CNN is not pretrained: it must learn to identify forests and dry vegetation solely via the gradients backpropagated through the logic.
We now analyze how LoH handles this task under different knowledge assumptions.} %See Appendix \ref{app:wildfire} for full implementation details.  

\color{black}

\paragraph{Full Knowledge (No Choice Operators).}
If the choice operator $[\cdot]$ is never used, the logical part has no learnable parameter, but the gradient can backpropagate from it to a neural part below. This corresponds to the NeSy approaches where knowledge is entirely specified a priori.  %In particular, this is how the Gödel Trick is used in \cite{Godel_trick}.

For the wildfire experiment, we set the LoH formula exactly to the ground-truth rule (\ref{eq:ground_truth}). %The only learnable component is the CNN $h_\theta$, which must learn to ground the visual predicates $DenseForest$ and $DryVegetation$ so that the symbolic layer can reproduce the labels.
Since the logic is fixed and correct, the learning focuses entirely on the perception module. The model successfully solves this symbol grounding problem, learning to map images to $\DenseForest$ and $\DryVegetation$ with high accuracy simply by minimizing the classification error of the $\WildFireRisk$ label (see Figure \ref{fig:wildfire}).

\paragraph{Selecting Reliable Rules.}
Suppose we have a knowledge set of $n$ candidate rules $r_1, \dots, r_n$, but we are unsure on whether they are correct. Then, we can use the following LoH formula to select a reliable subset:
\begin{equation}\label{eq:select_rules}
\bigwedge_{i=1}^{n} \; [r_i,\top] 
\end{equation}
Indeed, the hypothesis space spans all possible subsets: by selecting $r_i$ over $\top$, the model effectively picks such rule. When the rules are clauses (i.e., disjunctions of possibly negated propositions), this setup parallels \emph{KENN} \cite{daniele2019kenn}, which also have a learnable weight for each rule (even if it is never made discrete). %, but works with a continuous relaxation, unlike LoH. 
In our framework, each $r_i$ in \eqref{eq:select_rules} can be any formula. For example, we may have entire knowledge bases as $r_i$'s, and use \eqref{eq:select_rules} to decide which to trust. %Or, we may have some incomplete rules, whose completions can be learned with nested choose operators.

On the wildfire task, we build a pool of rules by taking the three ground-truth rules $\Fuel$, $\DryConditions$ and $\Trigger$ and augmenting them with $12$ plausible but
incorrect variants (see Appendix \ref{app:candidates}). Formula~\eqref{eq:select_rules} is then applied to this pool. The resulting learned subset is a data-driven refinement of the original possibilities.

\paragraph{Selecting One Rule per Set.}
Given $m$ sets $\{r_{i,1}, \dots, r_{i,n_i}\}$ of candidate rules, we may want to enforce the choice of \emph{exactly one} rule per set. This may happen for example because the rules in each set are mutually exclusive. %Or, as another example, because we want to learn a particular state of a system made of some categorical or continuous variables, and each of the $m$ sets of rules refers to a different variable. In both cases, 
For this purpose, we can use the following LoH formula:
\begin{equation}\label{eq:rules_fixed}
\bigwedge_{i=1}^{m} [r_{i,1},\,r_{i,2},\dots,r_{i,n_i}]
\end{equation}
For wildfire risk, suppose domain experts agree on the structure of the final rule as a conjunction of three components---fuel, dryness, trigger---but propose several alternatives for each component. We group these
into three sets and apply \eqref{eq:rules_fixed} with $m = 3$: one choice among fuel candidates, one among
dryness candidates, and one among trigger candidates (see Appendix \ref{app:candidates}). This regime differs from the previous one in that the model must commit to a single alternative within each group, instead of freely activating or deactivating rules.

\color{black}
\paragraph{Partial Knowledge Base.} 
If we have a knowledge base $K$ which is reliable but not complete, we can couple it with a rule-learning LoH formula $\Phi$, and use $K\land \Phi$. Similarly, we may have a knowledge base whose rules have some missing parts, and fill them with some rule-learning formulas.

As an example, in wildfire risk assessment, we consider the $\Fuel$ rule as given, while the rest is unknown and must be selected from the non-fuel candidate rules used in the previous settings (see Appendix~\ref{app:candidates}). %By freezing the valid $Fuel$ logic, the model effectively propagate the  the visual concepts $DenseForest$ and $DryVegetation$. And LoH learns which dryness- and trigger-related rules best complement the
For these candidates, we apply the subset-selection scheme of \eqref{eq:select_rules}, so that LoH learns which dryness- and trigger-related rules best complement the known $\Fuel$ rule. 

\redNote{
\paragraph{Results and Comparison to ILP.} Figure~\ref{fig:wildfire} compares the four knowledge regimes introduced so far: full knowledge, selecting reliable rules, selecting one rule per set, and partial knowledge base with $\Fuel$ given. As expected, richer priors yield better performance.

We also wanted to compare LoH against inductive logic programming (ILP). Among the differentiable ILP systems, we found no implementation that supports all regimes directly, by lacking a way to constrain the hypothesis spaces to operations such as ``select a subset of given rules''. We therefore added the required machinery starting from a $\partial$ILP \cite{evans2018learning} public implementation,\footnote{\href{ai-systems/DILP-Core}{https://github.com/ai-systems/DILP-Core}} and used such modified system as a baseline.\footnote{Concretely, we (i) added a \emph{mode-declaration}-style filter to restrict clause generation to prescribed sets (e.g., in ``selecting one rule per set'', $\Fuel/\DryVegetation/\Trigger$ can be defined only by one of their respective candidate rules); (ii) implemented the conjunction of multiple formulas via a conjunction structure with auxiliary predicates; (iii) grounded each candidate rule as a \emph{differentiable extensional predicate} using product fuzzy logic (since defining the rules intensionally would break the clause templates); and (iv) added mini-batch training, exploiting sample independence.} After the changes, $\partial$ILP assigns a learnable weight to each candidate rule through its clause-selection parameters. This effectively reproduce a choice mechanism under product fuzzy logic, with similar degrees of freedom as LoH, but in a more cumbersome and less direct way. 

Notably, in regimes where there is actually a logical part to be learned, LoH outperforms $\partial$ILP, with the performance gap widening in the settings involving the largest hypothesis spaces of logical rules (Selecting Reliable Rules, and Partial Knowledge Base).
}

%\setlength{\intextsep}{0pt}
%\begin{wrapfigure}{r}{0.5\textwidth}
\begin{figure}[ht]
    \centering
    \includegraphics[width=\linewidth]{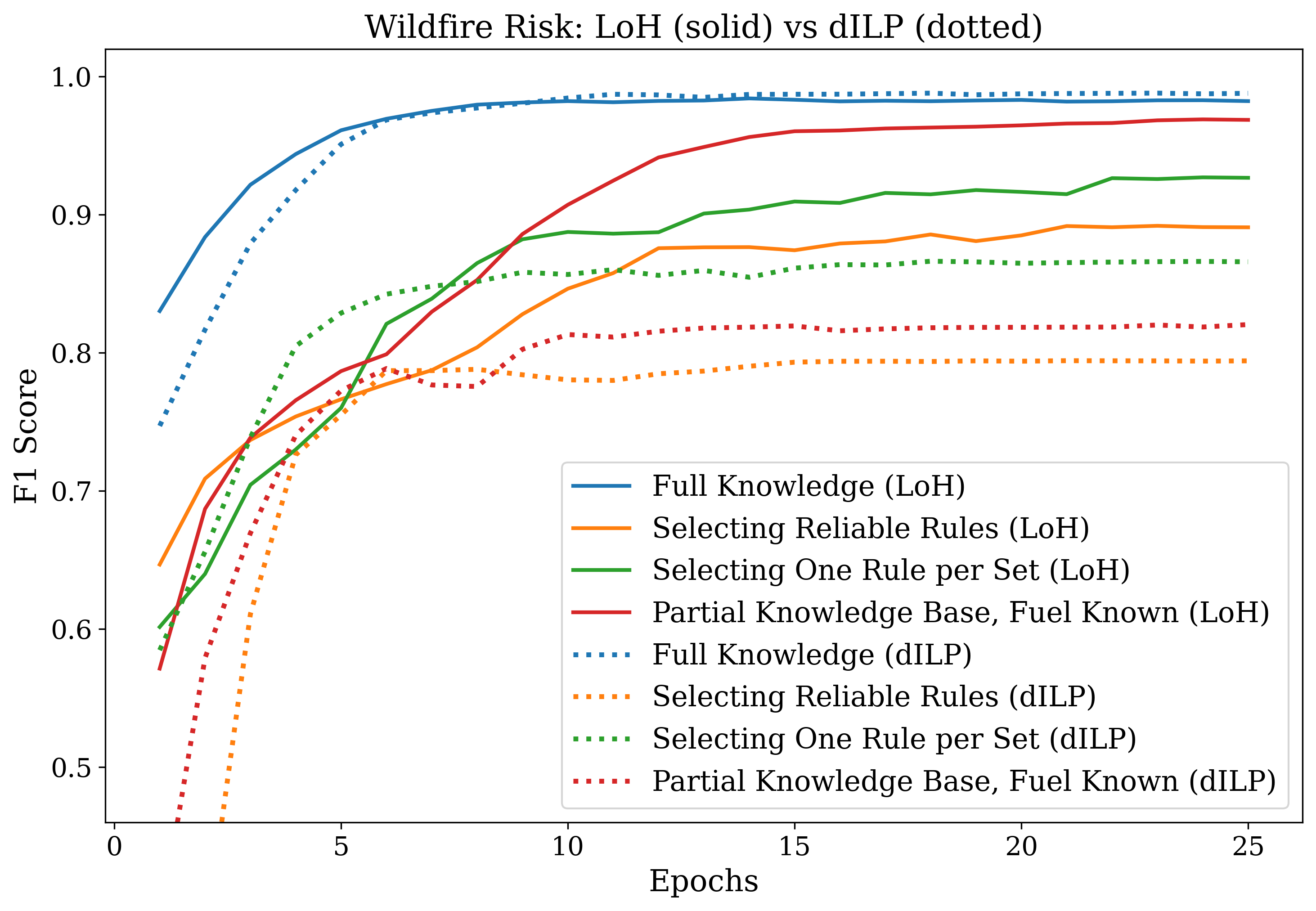}
    \caption{Average training curves, over 20 runs, of the LoH and $\partial$ILP models based on different levels of knowledge.} %, on the wildfire risk assessment task.}
    \label{fig:wildfire}
\end{figure}
%\end{wrapfigure}

\paragraph{Zero Knowledge (Pure Rule Learning).} For rule induction, we can combine neurons learning disjunctions and neurons learning conjunctions. A simple and most efficient way is to arrange them in layers and exploit parallel computation on tensors, like in standard Artificial Neural Networks. The following are LoH formulas for neurons learning the disjunction of a subset of neurons of the previous layer, with and without negations:
\begin{align}
    n^{(l+1)}_j &\coloneqq \bigvee_{i=1}^{m_l} [n^{(l)}_i, \neg n^{(l)}_i, \bot],  \\ %\quad
    n^{(l+1)}_j &\coloneqq \bigvee_{i=1}^{m_l} [n^{(l)}_i, \bot] \label{eq:neurons}
\end{align}
Analogously, conjunctive neurons simply replace disjunction with conjunction and $\bot$ with $\top$. The layers adopted in NLNs~\cite{payani2019inductivelogicprogrammingdifferentiable} and MLLPs~\cite{mllp} use neurons analogous to those, with product fuzzy logic instead of Gödel's. In this correspondence, they would use---as we suggested---conjunctive compilation for the choice operators in conjunctive neurons and vice versa for disjunctive neurons. However, LoH is flexible and many alternative designs of neurons are possible. For example, 
\begin{align}
    n^{(l+1)}_j & \coloneqq \bigvee_{i=1}^k [n^{(l)}_1,\dots, n^{(l)}_{m_l}, \neg n^{(l)}_1,\dots, \neg n^{(l)}_{m_l}],  \\
    %\quad  \text{and} \quad
    n^{(l+1)}_j & \coloneqq \bigvee_{i=1}^k [n^{(l)}_1,\dots, n^{(l)}_{m_l}] \label{eq:neurons2}
\end{align}
are neurons learning clauses of fixed width $k$ (possibly with repetitions)---$k$ being a hyperparameter.
A detailed empirical study of this zero-knowledge regime is deferred to Section~\ref{sec:experiments}.

\begin{comment}
{\color{red}{
Using the disjunctive compilations, the neurons defined in \eqref{eq:neurons} and \eqref{eq:neurons2} execute respectively the following operations:
\[
n_{j}^{(l+1)}
=
\max_{i=1,\dots,m_l}
\min(w_{ji},\, n_i^{(l)})
\]

\[
n_{j}^{(l+1)}
=
\max_{t=1,\dots,k}\,\max_{r=1,\dots,m_l}
\min(w_{jtr},\, n_r^{(l)})
\]
%where the weights $w_{ji}$ and $w_{jtr}$ are constructed in the way explained in Section \ref{sec:compilation}.
}}
\end{comment}

\paragraph{Respecting Syntactic Requirements.}
If the learned rules of a NeSy model are to be used also in a symbolic program, this may require them to adhere to a specific format or template. %In such a case, one can simply express the template in LoH. 
If it is possible to express the template in LoH, and LoH is quite flexible, then the adherence is guaranteed. As an example, here is a possible template for clauses of width~$3$: \begin{multline}\label{eq:template}
[v_1,\dots,v_n, \neg v_1,\dots,\neg v_n]
\;\lor\; \\
[v_1,\dots,v_n, \neg v_1,\dots,\neg v_n]
\;\lor\; \\
[v_1,\dots,v_n, \neg v_1,\dots,\neg v_n]
\end{multline}
And here is a template for definite clauses (i.e., clauses with exactly one positive literal):
\begin{equation}\label{eq:definite_clauses}
\bigvee_{i=1}^n [\neg v_i, \bot]
\;\lor\;
[v_1,\dots,v_n]
\end{equation}
%Appendix \ref{app:ablation} provides an experiment on enforcing these syntactic requirements, studying the convergence properties for different templates on a purely symbolic synthetic dataset. %We employ a distinct,  to better 
%For clarity, we investigate this ``syntactic requirements'' regime in Appendix~\ref{app:ablation} on a dedicated purely-propositional synthetic task where these templates can be inspected more clearly than in the wildfire setting.

%\vspace{2mm}
%\noindent
%Overall, these examples illustrate the remarkable flexibility of LoH. By mixing and matching different templates and choice operators, we can transition smoothly from purely known rules to completely data-driven rule discovery.
\section{Experiments}
\label{sec:experiments}

In this section, we evaluate the performance of our models, in their general form when no domain knowledge is provided: i.e., models made alternating conjunctive and disjunctive layers, with neurons of the form in \eqref{eq:neurons} or in \eqref{eq:neurons2}. 
We employ the suggested compilations with the Gödel trick, and test the performance on several benchmark classification datasets.  %All experiments were conducted on a cluster node equipped with an Nvidia RTX A5000 with 60GB RAM. 
%Code is available at \href{https://github.com/dbizzaro/LoH}{github.com/dbizzaro/LoH}.

\subsection{Classification on Tabular Datasets}\label{subsec:classification}

\paragraph{Datasets.} We use 12 classification benchmarks from the UCI Machine Learning Repository, available with CC-BY 4.0 license.
They were previously employed in \cite{mllp} for evaluating MLLP/CRS, a state-of-the-art rule-learning model using product fuzzy logic. 
As a preprocessing step, we adopt the same data discretization and binarization procedure as in \cite{mllp}: the recursive minimal entropy partitioning algorithm \cite{Dougherty1995}, followed by one-hot-encoding.  %Datasets' references and properties are available in Appendix \ref{app:datasets}.

\paragraph{Models.} 
In our model, we alternate conjunctive and disjunctive layers without negations. The choice between using neurons of type (\ref{eq:neurons}) or neurons of type (\ref{eq:neurons2}) is performed by the hyperparameters' tuning, together with the rest of the architecture (number and size of layers, and whether to start with a conjunctive or a disjunctive layer). 
We compare our model against Differentiable Logic Networks (DLN) \cite{petersen2022deep}, the aforementioned MLLP \cite{mllp}, and the rules extracted from it, which are called CRS. Note that for DLN we report the performance of the continuous model, which is generally higher than the discretized one. Instead, Gödel semantics ensures our model behaves identically before and after binarization, and CRS corresponds to the post-hoc binarization of MLLP. We also consider commonly used machine learning baselines: Decision Trees (DT), Random Forests (RF), XGBoost and standard fully-connected Neural Networks (NN).

Five of the twelve datasets have more than two classes, %\footnote{chess (18 classes), connect-4 (3 classes), letRecog (26 classes), nursery (5 classes) and wine (3 classes). A table summarizing the number of features and classes for each dataset is provided in Appendix G.} 
and we want to treat multi-class predictions as mutually exclusive output propositions. Therefore, in our model, we apply a reparameterization to the final layer, to guarantee that exactly one output per example exceeds the threshold value $0.5$---the output of the predicted class. This is analogous to the procedure used before for designing the weights of a choice operator, but does not require the addition of noise. Concretely, let $z_i$ be the logits of the outputs $o_i$ produced by the outmost layer---each one corresponding to a different class. Each logit gets shifted by subtracting the mean $\bar z$ of the two largest $z_i$'s. The resulting $z_i-\bar z$ values---of which, by construction, one and only one is positive---are then outputted through a sigmoid activation. %This enables gradient-based selection of the winning class without resorting to non-differentiable operators such as $argmax$.

\paragraph{Methodology.}
%\label{subsec:methdodology-classification}
Each dataset is divided into $20\%$ for testing and $80\%$ for training and validation. In particular, validation takes $12.5\%$ of the second split, so $10\%$ of the whole dataset. Since most datasets are unbalanced, we use the F$1$ score (macro) as classification metric. As loss functions, we use Binary Cross-Entropy for NN, DLN and our model, and Mean Squared Error for MLLP. All of them are optimized using  Adam \cite{adam}. 
For each dataset, the hyperparameters of each model are tuned using $80$ trials of the TPE algorithm \cite{NIPS2011_TPE} from the Optuna library. 
Architectural choices---such as the number and width of layers---are tuned alongside all other hyperparameters. %Appendix \ref{app:hyperparams} lists every tuned variable (and their search ranges), while the values ultimately selected are available in the code.
%Both MLLP and our models introduce a degree of randomness that can influence training outcomes significantly, even under identical settings. Hence, for such models, we perform two training runs for each non-pruned trial and average their metrics, to ensure a more robust hyperparameters' selection.

%The architectural options are tuned like the other hyperparameters, and the appendix lists all that is tuned, with ranges and values. 
%\ref{app:hyperparams}, 
%and the optimized values for each dataset are available with the code. 

\begin{comment}
\begin{table}
  \caption{Comparison of methods on the Visual Tic-Tac-Toe task.}
  \label{sample-table}
  \centering
  \begin{tabular}{lcccc}
    \toprule
    \multicolumn{2}{c}{Part}                   \\
    \cmidrule(r){1-2}
    Name     & Description     & Size ($\mu$m) \\
    \midrule
    Dendrite & Input terminal  & $\sim$100     \\
    Axon     & Output terminal & $\sim$10      \\
    Soma     & Cell body       & up to $10^6$  \\
    \bottomrule
  \end{tabular}
\end{table}
\end{comment}

%\cite{adult,bank_marketing,banknote,blogger,chess,connect_4,letRecog,magic,mushroom,nursery_76,tic-tac-toe,wine}
\begin{table*}[ht]
    \centering
    \begin{tabular}{lccccc|ccc}
    \toprule
        Dataset & DT & RF & XGBoost & NN &  MLLP & CRS & DLN & LoH %& Ensemble 
        \\ \midrule
        
        adult %\cite{adult} 
        & .80 \scriptsize{±.00} & .81 \scriptsize{±.00} & 
        \textbf{.82} \scriptsize{±.00} & .81 \scriptsize{±.01} 
        &
        .80 \scriptsize{±.01} & .75 \scriptsize{±.06} & 
        .80 \scriptsize{±.02} &
        \textbf{.81} \scriptsize{±.01} %& \textbf{.82 \scriptsize{±.01}} 
        \\ 
        
        bank-m. %\cite{bank_marketing} 
        & .74 \scriptsize{±.00} & .73 \scriptsize{±.00} & 
        .76 \scriptsize{±.00} & \textbf{.78} \scriptsize{±.01} & 
        .69 \scriptsize{±.08} & .75 \scriptsize{±.01} & 
        \textbf{.76} \scriptsize{±.01} &
        \textbf{.76} \scriptsize{±.01} %& .75 \scriptsize{±.01} 
        \\
        
        banknote %\cite{banknote} 
        & .95 \scriptsize{±.00} & .95 \scriptsize{±.01} & 
        \textbf{.96} \scriptsize{±.00} & \textbf{.96} \scriptsize{±.00} & 
        \textbf{.96} \scriptsize{±.00} & \textbf{.96} \scriptsize{±.00} & 
        .95 \scriptsize{±.01} &
        \textbf{.96} \scriptsize{±.00} %& \textbf{.96 \scriptsize{±.00}} 
        \\ 
        
        blogger %\cite{blogger} 
        & .64 \scriptsize{±.00} & .53 \scriptsize{±.00} 
        & .69 \scriptsize{±.00} & .82 \scriptsize{±.05} & 
        \textbf{.84} \scriptsize{±.00} & .78 \scriptsize{±.02} & 
        .72 \scriptsize{±.12} &
        \textbf{.83} \scriptsize{±.08} %& .82 \scriptsize{±.07} 
        \\
        
        chess %\cite{chess} 
        & .81 \scriptsize{±.00} & .58 \scriptsize{±.01} & 
        \textbf{.85} \scriptsize{±.00} & .82 \scriptsize{±.01} & 
        .83 \scriptsize{±.02} & \textbf{.74} \scriptsize{±.02} & 
        .41 \scriptsize{±.02} &
        .69 \scriptsize{±.01} %& .45 \scriptsize{±.03} 
        \\ 
        
        connect-4 %\cite{connect_4} 
        & .59 \scriptsize{±.00} & .55 \scriptsize{±.00} & 
        \textbf{.71} \scriptsize{±.00} & \textbf{.71} \scriptsize{±.04} &
        .58 \scriptsize{±.01} & .58 \scriptsize{±.01} & 
        \textbf{.62} \scriptsize{±.01} &
        .58 \scriptsize{±.01} %& .55 \scriptsize{±.01} 
        \\ 
        
        letRecog %\cite{letRecog} 
        & .80 \scriptsize{±.00} & .76 \scriptsize{±.00} & 
        .92 \scriptsize{±.00} & \textbf{.93} \scriptsize{±.00} & 
        .85 \scriptsize{±.01} & \textbf{.80} \scriptsize{±.01} & 
        .64 \scriptsize{±.02} &
        .77 \scriptsize{±.01} %& .71 \scriptsize{±.01} 
        \\ 
        
        magic04 %\cite{magic} 
        & .81 \scriptsize{±.00} & .83 \scriptsize{±.00} & 
        \textbf{.84} \scriptsize{±.00} & \textbf{.84} \scriptsize{±.00} & 
        \textbf{.84} \scriptsize{±.00} & .80 \scriptsize{±.00} & 
        \textbf{.83} \scriptsize{±.00} &
        \textbf{.83} \scriptsize{±.00} %& .83 \scriptsize{±.00} 
        \\ 
        
        mushroom %\cite{mushroom} 
        & \textbf{1.} \scriptsize{±.00} & \textbf{1.} \scriptsize{±.00} & 
        \textbf{1.} \scriptsize{±.00} & \textbf{1.} \scriptsize{±.00} & 
        \textbf{1.} \scriptsize{±.00} &\textbf{1.} \scriptsize{±.01} &
        \textbf{1.} \scriptsize{±.00} &
        \textbf{1.} \scriptsize{±.00} %& \textbf{1. \scriptsize{±.00}} 
        \\ 
        
        nursery %\cite{nursery_76} 
        & .79 \scriptsize{±.00} & .79 \scriptsize{±.00} & 
        .80 \scriptsize{±.00} & \textbf{1.} \scriptsize{±.00} & 
        \textbf{1.} \scriptsize{±.00} &
        \textbf{1.} \scriptsize{±.00} &  \textbf{1.} \scriptsize{±.00} & 
        \textbf{1.} \scriptsize{±.00} %& .97 \scriptsize{±.04} 
        \\
        
        tic-tac-toe %\cite{tic-tac-toe} 
        & .89 \scriptsize{±.00} & .98 \scriptsize{±.01} & 
        .97 \scriptsize{±.00} & .99 \scriptsize{±.01} & 
        \textbf{1.} \scriptsize{±.00} & \textbf{1.} \scriptsize{±.00} & 
        .99 \scriptsize{±.01} &
        .99 \scriptsize{±.01} %& .99 \scriptsize{±.01} 
        \\ 
        
        wine %\cite{wine} 
        & \textbf{1.} \scriptsize{±.00} & \textbf{1.} \scriptsize{±.01} & 
        .96 \scriptsize{±.00} & .97 \scriptsize{±.05} & 
        \textbf{1.} \scriptsize{±.00} & .96 \scriptsize{±.01} & 
        .97 \scriptsize{±.02} &
        \textbf{.98} \scriptsize{±.01} %& .98 \scriptsize{±.02} 
        \\ 
        \midrule

        mean (\textuparrow) & .82 & .79 & .86 & \textbf{.89} & .87 & .84 & .81 & \textbf{.85} %& .82 
        \\
        
        avg. rank (\textdownarrow) & 5.7 & 6.0 & 3.7 & \textbf{2.9} & 3.5 & 5.0 & 5.2 & \textbf{4.1} %& 4.2 
        \\
        \bottomrule
    \end{tabular}
    \caption{Mean F1 scores on the test sets of tabular benchmarks, out of $10$ runs. \redNote{Rightmost models are highlighted separately because they combine differentiable learning with symbolic rule extraction (DLN is reported in its soft form but can be discretized; CRS is the discretization of MLLP).}}
    \label{tab:uciml_results}
\end{table*}

\paragraph{Discussion.}
For the selected hyperparameters, Table~\ref{tab:uciml_results} provides the means and standard deviations of the test-set F$1$ scores, out of $10$ different training runs on the training plus validation sets.
Although vanilla neural networks attain the best summary score ($.89$) and rank ($2.9$), they---together with RF and XGBoost---do not allow to extract symbolic knowledge. MLLP---which come second---does support rule extraction, but with the loss in performance from it to CRS. In fact, only DT, CRS, the discretization of DLN, and our model achieve all the benefits of symbolic discrete rules, such as better interpretability, explainability, and efficiency of inference. 
Moreover, unlike DT, RF and XGBoost, our model is fully differentiable, so it could be placed downstream of perception modules (CNNs, transformers, etc.) and be trained end-to-end. 

Apart from the two datasets with the largest numbers of classes, namely \emph{chess} (with 18) and \emph{letRecog} (with 26), our model have consistently better scores than CRS, and is almost always on par with, or close to, the neural network baseline. This suggests that our handling of many-way classification may not be optimal. Notably, DLN struggles even more on those two benchmarks, and in general does not surpass our model even in its continuous (pre-discretization) form. Hence, when only a few classes are present, our model reliably ranks among the top performers while simultaneously providing symbolic rules and end-to-end differentiability.

\subsection{Visual Tic-Tac-Toe}
\label{subsec:visualttt}
One of the previous classification benchmarks \cite{tic-tac-toe} is based on the game of tic-tac-toe. The dataset provides all possible ending board configurations (for games in which $X$ begins), and the target is to predict whether $X$ has won the game or not. So the target function can be expressed with a simple formula in Disjunctive Normal Form (DNF), having $8$ clauses.\footnote{Let the input data be encoded with propositions $X_1$-$X_9$, $O_1$-$O_9$ and $B_1$-$B_9$, meaning that $X_i$ is true if there is an $X$ at position $(\lfloor i/3\rfloor, i\%3)$, and similarly for $O$s and $B$lank cells. Then, the target function is ${(X_1 \land X_2 \land X_3)} \lor {(X_4 \land X_5 \land X_6)} \lor {(X_7 \land X_8 \land X_9)} \lor {(X_1 \land X_4 \land X_7)} \lor {(X_2 \land X_5 \land X_8)} \lor  {(X_3 \land X_6 \land X_9)} \lor {(X_1 \land X_5 \land X_9) \lor (X_3 \land X_5 \land X_7)}$.}
%if the input data is encoded with propositions $X_1$-$X_9$, $O_1$-$O_9$ and $B_1$-$B_9$,\footnote{Meaning that $X_i$ is true if there is an $X$ at position $(\lfloor i/3\rfloor, i\%3)$, and similarly for $O$s and $B$lank cells.} the target function can be expressed with the following simple  formula in Disjunctive Normal Form (DNF): 
%\begin{multline*}\label{eq:ttt}
%(X_1 \land X_2 \land X_3) \lor (X_4 \land X_5 \land X_6) \lor (X_7 \land X_8 \land X_9) \\ \lor(X_1 \land X_4 \land X_7) \lor (X_2 \land X_5 \land X_8) \lor  (X_3 \land X_6 \land X_9)  \\ \lor (X_1 \land X_5 \land X_9) \lor (X_3 \land X_5 \land X_7)
%\end{multline*}
Let us introduce a more challenging variant of the dataset, which we refer to as \emph{Visual Tic-Tac-Toe}. In this version, instead of symbolic board encodings, each cell is represented by a MNIST image. Specifically, we assign images of the digit $0$ to represent $X$, images of the digit $1$ to represent $O$, and blank cells are represented by images of the digit $2$. As a result, instead of a structured propositional encoding, the inputs consist of $3 \times 3$ grids of grayscale images.
This modification significantly increases the difficulty of the task, as models must now also learn to recognize digit representations from the images, but with as little supervision as before (just the ``$X$ winning'' label). Traditional symbolic models would struggle with such high-dimensional and noisy input, making this an interesting benchmark for NeSy learning.

\paragraph{Models.} To handle the image-based input, we extend all models with a standard CNN. This network consists of two convolutional layers, each using a kernel size of $3$ with padding $1$, followed by ReLU activations and max-pooling. The first convolutional layer has $32$ output channels, while the second has $64$. After feature extraction, the CNN flattens the output and passes it through a fully connected layer of size $128$, a dropout layer with probability $0.5$, and another fully connected layer of size $3$. The three outputs for each of the nine images composing a grid are then concatenated. 

For DLN, MLLP/CRS and our models, the outputs of the CNN for the nine images are to be considered as input ``propositions'', so their values are clipped between $0$ and $1$. The learning process is end-to-end, including the CNN component. Crucially, the CNN is not pretrained to recognize the digits, nor are its outputs predefined to represent anything specific. %Instead, the CNN is trained together with the rule-learning logical network, in an end-to-end way. 
The hyperparameters (and their tuning) are as before, but we allow the CNN part to have a separate learning rate (in the range $10^{-5}$--$10^{-3}$). %from the one of our model (in the range $0.01$-$0.2$). %\footnote{Indeed, our model works with larger-than-usual learning rates (but with sparse gradients).}
Moreover, for both MLLP/CRS and our model, we fix the number of layers to $2$ and distinguish the two cases of whether the last one is disjunctive (DNF) or conjunctive (CNF). This is not possible for DLN.

We also implemented a neural baseline that follows a classical deep learning approach (NN). This model uses a CNN module as the one described earlier, but its outputs are not clipped, and their number is a hyperparameter in the range $3$-$32$ (instead of being fixed to $3$). This is because in this case the outputs of this part of the network may be better seen as embeddings rather than symbols. The concatenation of such vectors (for the nine images in a grid) is then fed to a multi-layer perceptron.

\paragraph{Methodology.}
%\label{subsec:methdodology-visualttt}
The original symbolic tic-tac-toe dataset is split with the same proportions as before ($70\%$-$10\%$-$20\%$). %($70\%$ and $10\%$ for training and validation, and $20\%$ for testing). 
The training and validation parts are given images from MNIST training set, while the test part is given images from MNIST test set. No image is used more than once. Moreover, for each board configuration in the training and validation sets, two different image grids are created, effectively doubling the number of samples. 

%Table~\ref{tab:visualttt_results} provides the mean and standard deviations of the test-set F$1$ scores, based on training each model $30$ times on the combined training and validation sets. What changes between the runs is the network initialization, the mini-batches, the added noise in our model and the random binarization occurring in MLLP as a form of regularization.

Beyond performance, a key advantage of NeSy approaches lies in their better interpretability. To interpret the learned decision rules, we first want to assign meaningful labels to the propositions (that corresponds to the CNN outputs units, for each position in the $3\times3$ grid). %These semantic labels are obtained post-hoc: 
After the training, we therefore inspect the three output units of the CNN, and determine whether a unit consistently responds to MNIST digits representing $X$, $O$, or blank cells. We assign $X$, $O$, or $B$ symbols based on such firing behavior, and use such names to extract and evaluate the learned logical formulas.
%After this alignment step, we interpret the corresponding units as representing $X$, $O$, and $B$ a symbolic evaluation of the extracted formula.

%The exact procedure for such labeling is reported in Appendix \ref{app:visual_ttt}. %, together with an example of the decision rules learned by each model. 
%Table~\ref{tab:visualttt_results} also provides the average F$1$ scores of these extracted rules on the purely symbolic tic-tac-toe dataset.

\begin{comment}
\begin{table}[ht]
    \centering
    \begin{tabular}{ll}
    \hline
        Methods & F$1$ score & F$1$ score extracted formula \\ \hline
        CNN & .910 \scriptsize{±.138} & - \\
        MLLP-CNF & .922 \scriptsize{±.022} & - \\
        MLLP (DNF) & .922 \scriptsize{±.022} & - \\
        CRS (CNF) & .912 \scriptsize{±.027} & \\
        CRS (DNF) & .912 \scriptsize{±.027} & \\
        Ours  & .944 \scriptsize{±.030} \\
        Ours  & \textbf{.951 \scriptsize{±.014}} \\
         \hline
    \end{tabular}
    \caption{Comparison of methods on the Visual Tic-Tac-Toe task.}
    \label{tab:visualttt_results}
\end{table}
\end{comment}

\begin{table*}[ht]
  \centering
  \begin{tabular}{lcc*{3}{cc}}
    \toprule
    \multicolumn{3}{l}{} &
    \multicolumn{2}{c}{MLLP} &
    \multicolumn{2}{c}{CRS} &
    \multicolumn{2}{c}{LoH}\\
    \cmidrule(lr){4-5}\cmidrule(lr){6-7}\cmidrule(lr){8-9}
     & NN & DLN & DNF & CNF & DNF & CNF & DNF & CNF \\
    \midrule
    NeSy eval & .91 \scriptsize{±.14} & .96 \scriptsize{±.01}  & .80 \scriptsize{±.22} & .94 \scriptsize{±.02} & .76 \scriptsize{±.28} & .92 \scriptsize{±.00} & \textbf{.97 \scriptsize{±.01}} & .95 \scriptsize{±.01} \\
    Symbolic eval  & -  & .37 \scriptsize{±.21} & - & - & .80 \scriptsize{±.30} & .97 \scriptsize{±.02} & \textbf{.99 \scriptsize{±.00}} & \textbf{.99 \scriptsize{±.00}} \\
    \bottomrule
  \end{tabular}
  \caption{Comparison of the models on the Visual Tic-Tac-Toe task.} \label{tab:visualttt_results}
\end{table*}

\paragraph{Discussion.}
Table~\ref{tab:visualttt_results} provides the mean and standard deviations of the test-set F$1$ scores, based on training each model $30$ times on the combined training and validation sets. What changes between the runs is the network initialization, the mini-batches, the added noise in our model and the random binarization occurring in MLLP as a form of regularization. When possible, we report also the performance of the extracted formulas (Symbolic eval).

%MLLP in the DNF setting fails to learn any meaningful mapping, with its binarization CRS being a formula always True.
On $2$ out of its $30$ training runs, the neural baseline remained stuck at always predicting the most frequent class, with macro F$1$ score of $.39$. Instead, on the other runs, its performance was comparable to that of our CNF model. Similarly, MLLP/CRS in the DNF setting remained stuck on $4$ runs, with the remaining $26$ performing slightly worse than our CNF model.  
Finally, MLLP in the CNF version, and DLN, have similar performance to our CNF model on all runs. However, the symbolic evaluation reveals that DLN fails when discretized, and MLLP/CRS learned less accurate symbolic rules than ours. 
Moreover, the DNF version of our model is consistently better than all other models, and also found the 100\%-correct formula reported above on $4$ occasions. 
These results highlight the ability of our models to recover symbolic decision rules with high fidelity, even when the symbols must be learned from continuous high-dimensional perceptions.

%On $28$ out of the $30$ training runs, the performance of the NN baseline was comparable to that of our CNF model. However, while the got always results above $.80$, the two other runs of the CNN baseline remained stuck at always predicting the most frequent class, with macro F$1$ score of $.39$. Thus, both our base and ensemble model outperformed the CNN baseline.  %This remains true (with $p=0.002$ for the base model) also with respect to MLLP, the non-binarized (and therefore less interpretable) version of CRS.

%Furthermore, our models also surpassed CRS, with a statistical significance of $p < 0.001$.

%When testing the learned logical formulas against the original tic-tac-toe symbolic dataset, the ones obtained from our base model achieved $.981$~{\scriptsize $\pm .024$} mean F$1$ score, outperforming those obtained from CRS, which achieved $.969$~{\scriptsize $\pm .013$}. This result highlights the ability of our model to recover symbolic decision rules with high fidelity, even when the symbols must be learned from continuous high-dimensional perceptions.

\section{Conclusion and Future Work}
\label{sec:future_work}

We introduced a single, compact language for expressing hypothesis spaces of logical formulas and compiling them into differentiable models whose discrete rule extraction is provably loss-free under Gödel semantics. LoH unifies knowledge injection and rule induction within one propositional paradigm, and yields strong results on both tabular and perceptual benchmarks, while retaining the possibility to extract learned logical formulas following arbitrary templates. 

Despite these encouraging results, the present work leaves two important avenues open. First, the empirical validation should be broadened, targeting larger datasets, use of partial knowledge and additional NeSy tasks with complex perceptions. 
%Second, the formalism is so far propositional. Extending LoH with first-order logic quantifiers would unlock relational reasoning and allow direct comparison with first-order NeSy learners. Addressing these two limitations with richer logic and wider experimentation constitutes our next research milestone.
Second, LoH is currently propositional, while many NeSy applications are relational. In practice, many first-order systems already ground rules over a finite domain and then operate propositionally; the genuinely first-order aspect is the quantifier-driven aggregation over groundings. A first-order LoH could internalize this by adding $\forall$ and $\exists$ to the language, interpreting them as conjunctions/disjunctions over substitutions, and allowing choice operators to range over quantified and quantifier-free formulas alike. The main challenge would remain scalability due to the combinatorial blow-up of groundings, as in many FOL-based NeSy approaches.

%\section*{Reproducibility Statement}
%\label{sec:reproducibility} Assumptions, design choices and claims are reported in the main text and further explained in Appendices \ref{app:godeltrick} and \ref{app:choice_variables}. Theorem \ref{th:binarization} is a generally known result, and a proof can be found in \cite{Godel_trick}. We share the code in the supplementary materials, and will make it public upon acceptance. Experimental setups---including dataset preprocessing, splits, metrics, protocol, etc.---are described in Section \ref{sec:experiments}. Appendix~\ref{app:hyperparams} reports complete hyperparameter ranges, and the values taken for each benchmark are available within the code repository. Appendix~\ref{app:datasets} references the tabular datasets, and the code provide a way to build the Visual Tic-Tac-Toe benchmark. Finally, Appendix~\ref{app:visual_ttt} explains the symbol-labeling procedure utilized in the Visual Tic-Tac-Toe experiment.

\appendix
\appendix
\section{LoH with Placeholders: a Formalization}\label{app:choice_variables}

%\paragraph{Placeholder Declarations.}
We allow the user to \emph{name} any LoH subformula by writing a declaration of the form $p \coloneqq \phi$, 
where $\phi$ is a LoH formula and $p$ is a fresh identifier (i.e., a name different from any propositional variable and any other identifier). A placeholder can itself mention other placeholders that have been declared earlier (the directed graph of such references must be acyclic, to avoid circular definitions).\footnote{Notice that one can still build LoH-based recurrent networks exactly as one builds ordinary RNNs: by passing the hidden state from one time step to the next. In this case, one may write an LoH formula formally relating some placeholders to the ones at the previous time step, and acyclicity is in the unrolled architecture.} 
Given a well-defined %and topologically ordered 
list of declarations $\mathcal{D}=\{p_1 {\coloneqq} \phi_1, \dots , p_m {\coloneqq} \phi_m\}$, 
Algorithm~\ref{alg:hypotheses} produces the hypothesis space $\mathcal{H}(\Phi)$ for every LoH formula~$\Phi$ possibly containing the placeholders $p_1,\dots,p_m$.

\newcommand{\Env}{\mathcal{I}}
\newcommand{\EnvSpace}{\mathcal{S}}
\newcommand{\Eval}{\textsc{E}}
\newcommand{\Hspace}{\mathcal{H}}

\begin{algorithm}[ht]
\caption{Construction of the hypothesis space $\Hspace(\Phi)$}\label{alg:hypotheses}
\renewcommand{\algorithmicrequire}{\textbf{Input:}}
\renewcommand{\algorithmicensure}{\textbf{Output:}}
\begin{algorithmic}[1]
\Require LoH formula $\Phi$; declaration list $\mathcal{D}=\{p_i \coloneqq \phi_i\}_{i=1}^m$
\Ensure  $\Hspace(\Phi)$, the hypothesis space of $\Phi$
\Statex
\Function{Hypotheses}{$\Phi,\mathcal{D}$}
    \State \textbf{Tagging.}  Traverse every choice node $[\Psi_1,\dots,\Psi_n]$ in both $\Phi$ and the bodies $\phi_i$;
           assign a fresh identifier $c$ and arity $n_c$ to each.
    \State \textbf{Indices Space.}  $\EnvSpace \gets \prod_{c}\{1,\dots ,n_c\}\,$ (Cartesian pr.)
    %\State \textbf{Placeholder functions.}  For each declaration $p_i\coloneqq\phi_i$ define
    %       $\Phi_i(\Env) \coloneqq\ \Eval(\phi_i,\Env)$ (see next step for $\Eval$)
    \State \textbf{Evaluator.}
          Define the function $\Eval(\cdot,\Env)$ recursively:
          \begin{align}
             & \Eval(v,\Env)             \coloneqq v \tag{\text{R1'}}\\
             & \Eval(\lnot \Psi_1,\Env)     \coloneqq \lnot\Eval(\Psi_1,\Env) \tag{\text{R2'}}\\
             & \Eval(\Psi_1\land \Psi_2,\Env)  \coloneqq \Eval(\Psi_1,\Env)\land\Eval(\Psi_2,\Env) \tag{\text{R3'}}\\
             & \Eval(\Psi_1\lor \Psi_2,\Env)   \coloneqq \Eval(\Psi_1,\Env)\lor\Eval(\Psi_2,\Env) \tag{\text{R4'}}\\
             & \Eval([\Psi_1,\dots,\Psi_n]_c,\Env) \coloneqq \Eval(\Psi_{\Env[c]},\Env) \tag{\text{R5'}}\\
             & \Eval(p_i,\Env)           \coloneqq \Eval(\phi_i,\Env) \tag{\text{Placeholders}}
          \end{align}
    \State \textbf{Enumeration.}  $\Hspace(\Phi) \;\gets\; \bigl\{\Eval(\Phi,\Env)\;\bigl|\; \Env\in\EnvSpace \bigr\}$
    \State \Return $\Hspace(\Phi)$
\EndFunction
\end{algorithmic}
\end{algorithm}
%\vspace{0.5em}

%\paragraph{Correctness proof (sketch).}
%Each syntactic choice node of the LoH formula $\Phi$ is tagged with a unique gate identifier. Fixing an index for every gate yields an \emph{environment} $e\in\EnvSpace$, and the Cartesian product of all local domains is the full set of possible environments. The recursive evaluator $\textsc{Eval}(\cdot,e)$ mirrors the forward pass of the compiled decision‑gated model node by node, so for every environment we have $\textsc{Eval}(\Phi,e)=\text{output}(\Phi,e)$. Because the mapping $e\mapsto$ gate configuration is bijective, the algorithm’s enumeration $\{\textsc{Eval}(\Phi,e)\mid e\in\EnvSpace\}$ is (i) \emph{sound}, as every element can be obtained from the model, and (ii) \emph{complete}, since every possible gate configuration is realized by some $e$. Hence the returned set coincides exactly with the hypothesis space, $\mathcal{H}(\Phi)=\{\textsc{Eval}(\Phi,e)\mid e\in\EnvSpace\}$. 

Given $\Phi$ and $\mathcal{D}$, the Cartesian product $\EnvSpace$ (in line 3) is exactly the set of all possible assignments of concrete choices to every choice operator. Indeed, for every $\Env\in\EnvSpace$, the algorithm considers an index $\Env[c] \in \{1,\dots, n_c\}$ for every choice node $c$ (of arity $n_c$), %Consequently, the Cartesian product $\EnvSpace$ is exactly the set of all possible assignments of concrete indices to those parameters.
and the evaluator $\Eval(\cdot,\Env)$ is a function that (i) respects placeholder sharing and (ii) replaces every choice node $[\Psi_1,\dots,\Psi_{n_c}]_c$ with the branch $\Psi_{\Env[c]}$. Thus, $\Eval(\Phi,\Env)$ reflects the discrete model obtained compiling $\Phi$ (as in Section \ref{sec:compilation}), when $\Env[c]$ is the index of the unique weight greater than $0.5$, for each compiled choice operator $c$.
%Hence, $\Eval(\Phi,\Env)$ yields a unique fully specified propositional formula $\Psi_e$.  Distinct environments differ at least at one choice identifier, so the map $\Env\mapsto \Psi_e$ is injective; conversely, any fully discretized model must select a branch for every $c$, and therefore corresponds to some $\Env$, establishing surjectivity.  
It follows that $\mathcal{H}(\Phi)$ coincides exactly with the hypothesis space of the compiled and discretized model, as we wanted.

\redNote{When compiled, placeholders implement \emph{weight sharing}: all occurrences of the same placeholder reuse the same choice parameters, so a single shared selection is made wherever the placeholder is referenced.}

\section{Wildfire Risk: Candidate Rules}
\label{app:candidates}
\redNote{
Section~6 compares several ways of exploiting symbolic knowledge. The task is defined over nine propositional variables. Two of them are \emph{visual} concepts, to be formed by a CNN from the images: $\DenseForest$ and $\DryVegetation$ (see Figure \ref{fig:wildfire_images}). The remaining seven are \emph{non-visual} features provided in tabular form: $\LowHumidity$, $\HighTemperature$, $\RainedRecently$, $\StrongWind$, $\LightningsFrequent$, $\LowHumanActivity$ and $\PowerLinesNearby$.
The ground-truth label $\WildFireRisk$ is obtained by combining three interpretable intermediate rules: see Equations \eqref{eq:wildfire}. 
%The starting point is a small pool of candidate rules (one ground-truth rule plus some distractors).

Across the knowledge regimes, we start from three sets of candidate rules (for Fuel, Dryness, and Trigger), each comprising one ground-truth rule and a set of distractors.
We index candidates as $\phi_{i,j}$, where $i \in \{\Fuel,\DryConditions,\Trigger\}$ and
$j \in \{1,\dots,n_i\}$. The ground truth rules are those with $j=1$.

\begin{itemize}
  \item \textbf{Fuel candidates} ($n_{\Fuel}=4$):
  \begin{align*}
    \phi_{\Fuel,1} &:= \DenseForest \lor (\DryVegetation \land \StrongWind) \\
    \phi_{\Fuel,2} &:= \DenseForest \\
    \phi_{\Fuel,3} &:= \DenseForest \land \DryVegetation \\
    \phi_{\Fuel,4} &:= \DryVegetation \land (\StrongWind \lor \LowHumidity)
  \end{align*}

  \item \textbf{Dryness candidates} ($n_{\DryConditions}=6$):
  \begin{align*}
    \phi_{\DryConditions,1} &:= \LowHumidity \lor (\HighTemperature \land \neg \RainedRecently) \\
    \phi_{\DryConditions,2} &:= \LowHumidity \land \HighTemperature \land \neg \RainedRecently \\
    \phi_{\DryConditions,3} &:= \HighTemperature \lor (\LowHumidity \land \neg \RainedRecently) \\
    \phi_{\DryConditions,4} &:= \neg \RainedRecently \lor (\LowHumidity \land \StrongWind) \\
    \phi_{\DryConditions,5} &:= \neg \RainedRecently \lor (\LowHumidity \land \HighTemperature) \\
    \phi_{\DryConditions,6} &:= \DryVegetation \land \neg \RainedRecently
  \end{align*}

  \item \textbf{Trigger candidates} ($n_{\Trigger}=5$):
  \begin{align*}
    \phi_{\Trigger,1} &:= \LightningsFrequent \lor \neg \LowHumanActivity \lor \PowerLinesNearby \\
    \phi_{\Trigger,2} &:= \LightningsFrequent \land \LowHumanActivity \\
    \phi_{\Trigger,3} &:= \PowerLinesNearby \land \StrongWind \\
    \phi_{\Trigger,4} &:= \RainedRecently \land \LightningsFrequent \\
    \phi_{\Trigger,5} &:= \neg \LowHumanActivity
  \end{align*}
\end{itemize}

Let $\mathcal{I} := \{\Fuel,\DryConditions,\Trigger\}$. In the following we report the LoH formula used for each knowledge regime. 

\begin{itemize}
  \item \textbf{Full Knowledge:}
  \[
    \phi_{\Fuel,1} \;\land\; \phi_{\DryConditions,1} \;\land\; \phi_{\Trigger,1}
  \]

  \item \textbf{Selecting Reliable Rules:}
  \[
    \bigwedge_{i \in \mathcal{I}}\;\bigwedge_{j=1}^{n_i}\;[\,\phi_{i,j},\;\top\,]
  \]
  %(Choosing $\phi_{i,j}$ activates that rule; choosing $\top$ drops it.)

  \item \textbf{Selecting One Rule per Set:}
  \[
    \bigwedge_{i \in \mathcal{I}}\;[\,\phi_{i,1},\,\phi_{i,2},\,\dots,\,\phi_{i,n_i}]
  \]

  \item \textbf{Partial Knowledge Base, Fuel Known:}
  \[
    \phi_{\Fuel,1}
    \;\land\;
    \bigwedge_{i\in \mathcal{I}\setminus\{\Fuel\}}\bigwedge_{j=1}^{n_{i}}[\,\phi_{i,j},\top\,]
  \]
\end{itemize}
}

%\paragraph{Note on candidate-set subdivision.}
%We separate candidates into Fuel/Dryness/Trigger sets only for the
%\emph{Selecting One Rule per Set} regime. In the \emph{Selecting Reliable Rules} and
%\emph{Partial Knowledge Base} regimes, all formulas are treated as members of a single
%undivided pool (the indices $i,j$ above still uniquely identify them).

\begin{figure}[t]
  \centering
  % Dense forest, NOT dry vegetation
  \begin{subfigure}[b]{0.11\textwidth}
    \centering
    \includegraphics[width=\linewidth]{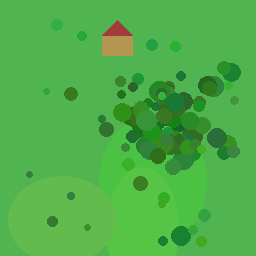}
    \caption{\small $F, \neg DV$}
  \end{subfigure}
  % Dense forest, dry vegetation
  \begin{subfigure}[b]{0.11\textwidth}
    \centering
    \includegraphics[width=\linewidth]{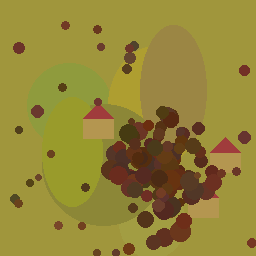}
    \caption{\small $F, DV$}
  \end{subfigure} %\hfill
  % No dense forest, NOT dry vegetation
  \begin{subfigure}[b]{0.11\textwidth}
    \centering
    \includegraphics[width=\linewidth]{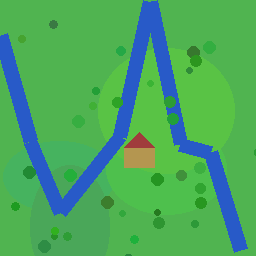}
    \caption{\small $\neg F, \neg DV$}
  \end{subfigure} 
  % No dense forest, dry vegetation
  \begin{subfigure}[b]{0.11\textwidth}
    \centering
    \includegraphics[width=\linewidth]{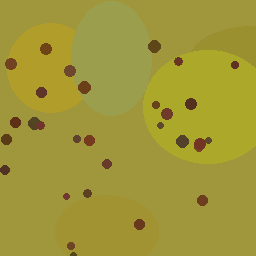}
    \caption{\small $\neg F, DV$}
  \end{subfigure}

  \caption{Examples of synthetic wildfire images for the four possible combinations of $\DenseForest$ ($F$) and $\DryVegetation$ ($DV$).} \label{fig:wildfire_images}
\end{figure}

\section{Formulas Length}
\redNote{
To evaluate the interpretability in terms of syntactic compactness, we compared the length of the propositional formulas extracted from CRS and LoH. We restricted this analysis to the binary classification datasets of Section \ref{subsec:classification}. To ensure a fair comparison, we constrained the architectures to yield a formula in disjunctive normal form (DNF) by design: a single hidden conjunctive layer encoding candidate clauses, followed by a disjunctive output neuron. %DNF formulas are easy to interpret because they are lists of sufficient conditions for the positive class. 
Except for this architectural constraint, all other hyperparameters and protocol details were tuned exactly as in the main experiment. Table~\ref{tab:length} reports the number of clauses and the mean clause width, averaged over $10$ independent runs. Before computing such values, we applied a simple pruning procedure following \cite{mllp}: we removed clauses that subsume others and are therefore redundant, and we discarded clauses that were never satisfied by any instance in the training set.

\begin{table}[ht]
%\color{red}
    \centering
    \caption{Average number of clauses and average clause size of the extracted DNF formulas, over 10 runs.}
    \label{tab:length}
    \begin{tabular}{l*{2}{cc}}
    \toprule
    \multicolumn{1}{l}{Dataset} &
    \multicolumn{2}{c}{CRS} &
    \multicolumn{2}{c}{LoH}\\
    \cmidrule(lr){2-3}\cmidrule(lr){4-5}
    & \#Clauses & Cl. Size & \#Clauses & Cl.\ Size  \\
     \midrule
        
        adult & 10.8 & 2.4 & 17.3 & 3.4  
        \\ 
        
        bank-m. &  118.8 & 6.1 & 14.3 & 4.8 
        \\
        
        banknote & 8.3 & 1.9 & 10.5 & 2.5 
        \\ 
        
        blogger & 6.6 & 2.8 & 9.1 & 3.1 
        \\
        
        magic04 & 73.7 & 2.7  &  30.6 & 2.8 
        \\ 
        
        mushroom & 7.7  & 3.9 & 10.0 & 7.6 
        \\ 
        
        tic-tac-toe & 8.0 & 3.0 & 12.2 & 3.5
        \\ 
        
        \bottomrule
    \end{tabular}
\end{table}

Overall, we find that CRS tends to produce smaller formulas than LoH in many simple cases. However, for \emph{bank-marketing} \cite{bank_marketing} and \emph{magic04} \cite{magic}, the extracted CRS formulas are extremely long outliers. In contrast, LoH does not exhibit such explosions in formula length.
}

\section*{AI Declaration}
We used LLMs as general-purpose assist tools for polishing prose, for minor code completions, and for generating the code used to produce the synthetic wildfire images. All such outputs were reviewed and verified by the authors, who take full responsibility.

\ifarxiv
\section*{Author Contributions}
\textbf{A.D.} conceived the idea, developed an initial prototype, supervised the project, and proofread the manuscript.  

\noindent \textbf{D.B.} carried out the software implementation, conducted all the experiments, and wrote the manuscript.
\fi

\bibliographystyle{kr}
\bibliography{references}

\ifarxiv
\newpage
\section{Gödel Trick}
\label{app:godeltrick}
In this appendix, we provide a recap of the \emph{Gödel trick with categorical variables}, adapting its original presentation in~\cite{Godel_trick} to our setting with fuzzy truth values in $[0,1]$ discretized to $\{0,1\}$ via a $0.5$ threshold (instead of real values discretized to $\{-1,1\}$ via the sign function). We also adapt the proof in~\cite[Section 8]{Godel_trick}, showing that the Gödel Trick is equivalent to the classical \emph{Gumbel-max trick} \cite{gumbel1954statistical}, of which the widely used \emph{Gumbel-softmax trick}~\cite{jang2017categorical} is a smooth approximation.

%Within the Logic of Hypotheses framework, the Gödel Trick allows end-to-end differentiable training while guaranteeing lossless binarization of gates at test time. By combining:
%\begin{itemize}
%    \item Gödel semantics for logical consistency after thresholding,
%    \item Gumbel perturbations for stochastic exploration,
%    \item temperature-controlled sigmoid for smooth optimization,
%\end{itemize}
%we obtain a principled mechanism to learn discrete subformula selections in a continuous space, bridging symbolic rule search and gradient-based learning.

%Recall that under Gödel fuzzy semantics, conjunction and disjunction are interpreted as:
%\[
%x \wedge y = \min(x,y), \quad x \vee y = \max(x,y), \quad \neg x = 1-x.
%\]
%A key property of Gödel logic is that thresholding all truth values yields a Boolean interpretation consistent with the Boolean evaluation of the same formula (Theorem \ref{th:binarization}). This allows us to \emph{discretize} both intermediate truth values and learned weights without altering the final Boolean prediction. Hence, after training, every choice operator selects exactly one branch by setting a single gate $w_i$ above $0.5$ and the rest below.

\paragraph{Gödel Trick Recap.}
Direct gradient descent on Gödel logic is prone to stalling in local minima. The Gödel Trick counteracts this by injecting noise.
In our formulation, it takes a set of logits $z_i$ (one per candidate branch of a choice operator), and applies the following three steps, depicted in Figure \ref{fig:weights}:
\begin{enumerate}
\item \textbf{Noise addition (only during training):} 
$z'_i := z_i + n_i$, with $n_i \stackrel{\text{i.i.d.}}{\sim}  \mathrm{Gumbel}(0,\beta)$.
\item \textbf{Re-centering:} $
z''_i := z'_i - \bar{z}'$, where $\bar{z}'$ is the mean of the two largest perturbed logits.
\item \textbf{Sigmoid application:} $w_i := \sigma\!\left( \frac{z''_i}{T} \right)$, 
where $T>0$ is a temperature hyperparameter. 
\end{enumerate}
At test time, we can binarize the weights $w_i$ at $0.5$, ensuring that exactly one $w_i$ equals $1$ and the rest are $0$. Thanks to Theorem \ref{th:binarization}, this discretization does not alter the final Boolean predictions of the model.

%\begin{wrapfigure}{l}{0.5\textwidth}
\begin{figure}[ht] \centering
\begin{tikzpicture}[>=latex,scale=1.5,yscale=-0.95]
\tikzset{tick/.style={circle,fill=black,inner sep=1pt}}
%------------------------------------------------
%  Nodes
%------------------------------------------------
\node (A) at (3.35,7.0) {};
\node[tick,label=below:$z_c$] (B) at (4.25,7.0) {};
\node[tick,label=below:$0$] (C) at (5.0,7.0) {};
\node[tick,label=below:$z_a$] (D) at (5.6,7.0) {};
\node[tick,label=below:$z_b$] (E) at (6,7.0) {};
\node (F) at (7.2,7.0) {};
\node  (G) at (3.35,6.0) {};
\node[tick,label=below:$z'_c$] (H) at (4.75,6.0) {};
\node[tick,label=below:$\,\,z'_a$] (I) at (5.4,6.0) {};
\node[tick,label=below:$\bar z'$] (J) at (5.9,6.0) {};
\node[tick,label=below:$z'_b$] (K) at (6.4,6.0) {};
\node (L) at (7.2,6.0) {};
\node (M) at (3.35,5.0) {};
\node[tick,label=below:$z''_c$] (N) at (3.85,5.0) {};
\node[tick,label=below:$z''_a$] (O) at (4.5,5.0) {};
\node[tick] (P) at (5.0,5.0) {};
\node[tick,label=below:$z''_b$] (Q) at (5.5,5.0) {};
\node (R) at (7.2,5.0) {};
\node (S) at (3.35,4.5) {};
\node (T) at (7.2,4.5) {};
\node (U) at (3.25,3) {};
\node (V) at (7.2,3) {};
\node (W) at (5.0,2.5) {};
\node (Z) at (5.0,4.85) {};

%  Named points on the curve aligned with N, O, P and Q
\node[tick] (Nsig) at (3.85,4.325) {};
\node[tick] (Osig) at (4.5,4.06) {};
%\node[tick, label=left:$0.5$] (Psig2) at (5,4) {};
\node[tick] (Qsig) at (5.5,3.44) {};

\node[tick, label=right:$w_c$] (Nsig2) at (5,4.325) {};
\node[tick,label=right:$w_a$] (Osig2) at (5,4.06) {};
\node[tick, label=left:$w_b$] (Qsig2) at (5,3.44) {};

%------------------------------------------------
%  Sigmoid
%------------------------------------------------
\draw[thick,domain=3.45:6.85,samples=150,smooth,variable=\x]
      plot (\x,{4.5 - 1.5/(1+exp(-1.75*(\x-5))) });

%------------------------------------------------
%  Edges
%------------------------------------------------
\draw[->] (A) -- (F);
\draw[->] (G) -- (L);
\draw[->] (M) -- (R);
\draw[->] (Z) -- (W);
\draw[dashed] (C) -- (P);
\draw[->] (B) -- (H);
\draw[->] (D) -- (I);
\draw[->] (E) -- (K);
\draw[dashed,->] (J) -- (P);
\draw[->] (I) -- (O);
\draw[->] (K) -- (Q);
\draw[->] (H) -- (N);
\draw[->] (S) -- (T);
\draw[dashed] (U) -- (V);

\draw[-] (N) -- (Nsig);
\draw[-] (O) -- (Osig);
\draw[-] (Q) -- (Qsig);

\draw[->] (Nsig) -- (Nsig2);
\draw[->] (Osig) -- (Osig2);
\draw[->] (Qsig) -- (Qsig2);

% Text labels for the horizontal bands (right side)
\node[anchor=west] at (7.4,6.4) {$z'_i := z_i + n_i$};
\node[anchor=west] at (7.4,6.7) {$n_i \sim G(0,\beta)$};
\node[anchor=west] at (7.4,5.5) {$z''_i := z'_i - \bar z'$};
\node[anchor=west] at (7.4,4) {$w_i := \sigma(z''_i/T)$};
\end{tikzpicture}
\caption{Given a choice operator $[a,b,c]$, the figure illustrates the differentiable three-stage procedure that turns the raw, real-valued logits $z_i$ attached to each candidate sub-formula into logical gates $w_i\in[0,1]$. From the bottom up: (1) during training, i.i.d.\ Gumbel noise $n_i\sim\mathrm{Gumbel}(0,\beta)$ is added to each logit; (2) the mean $\bar z'$ of the two largest perturbed logits is subtracted from all values; (3) temperature-scaled sigmoid function is applied.  %When the $w_i$'s are later discretized by the $0.5$ threshold, exactly one gate remains open ($w_i>0.5$) and all others close, selecting a single sub-formula.
} \label{fig:weights}
\end{figure}
%\end{wrapfigure}

\paragraph{Equivalence to the Gumbel-max Trick.}
The \emph{Gumbel-max trick} is a reparameterization
method for sampling from a categorical distribution. Let the categorical distribution be defined by the probabilities $\pi \coloneqq softmax(\theta)$. Then, the probability that $\arg\max_{i} \left( \theta_i + g_i \right) = j$, where $g_i \sim Gumbel(0,1)$, is precisely equal to $\pi_j$. Hence, \[\onehot(\arg\max_{i} \left( \theta_i + g_i \right))\] produces an exact sample from the categorical distribution with probabilities $\pi$. This is how the Gumbel-max trick works, and the Gumbel-softmax trick is a differentiable approximation which substitutes $\onehot\circ\arg\max$ with $softmax$.

Setting $\theta_i := z_i / \beta$ and $g_i := n_i / \beta$, the perturbation step of the Gödel Trick becomes:
\[
z'_i = \beta \left( \theta_i + g_i \right)
\]
The centering step subtracts the mean of the two largest perturbed logits, which does not affect the $\arg\max$, since it simply shifts all values by the same constant. Similarly, also the multiplication by a positive constant and the application of a monotonically increasing function such as the sigmoid do not affect the $\arg\max$.  Hence:
\[
\arg\max_i w_i = \arg\max_i z''_i = \arg\max_i z'_i = \arg\max_i (\theta_i + g_i)
\]
Since the maximum weight is by construction the only one surpassing the threshold value $0.5$, discretizing the weights $w_j$ yields precisely the same effect as $\onehot(\arg\max_i(w_i))$. Thus, we can conclude that 
\[
\rho(w_j) = \onehot(\arg\max_{i} \left( \theta_i + g_i \right))_j
\] 
for every $j$.
Moreover, because we are using Gödel semantics and Theorem \ref{th:binarization} applies, the thresholding of the weights is implicit when considering binarized output predictions of the entire model. Hence, in our context, the Gödel trick is equivalent to the Gumbel-max trick, which is more precise than the Gumbel-softmax one. 

\section{Counterexample for Product Fuzzy Logic}\label{app:counterexample}
Let us consider the disjunctive compilation of $[a, b]$:
\begin{equation*}\label{eq:counterexample}
    (w_a \land a) \lor (w_b \land b)
\end{equation*}
and interpret it with the following fuzzy values:
$\mathcal{I}(w_a) = 0.6$, $\mathcal{I}(a)=0.7$, $\mathcal{I}(w_b)=0.4$ and $\mathcal{I}(b)=0$.
Using product fuzzy logic, 
\[
\mathcal{I}((w_a \land a) \lor (w_b \land b)) = 0.6 * 0.7 = 0.42 < 0.5
\]
whose discretized Boolean value $\rho_{0.5}(0.42)=0$ (False).
%is $0$ (False).
On the other hand, if $\mathcal{B}$ is the Boolean discretization of $\mathcal{I}$, so that $\mathcal{B}(w_a) = 1$, $\mathcal{B}(a)=1$, $\mathcal{B}(w_b)=0$ and $\mathcal{B}(b)=0$, then
\[
\mathcal{B}((w_a \land a) \lor (w_b \land b)) = 1
\]
This means that the discretized explanation disagrees with the behavior of the network.  Analogous counterexamples can be built for Łukasiewicz logic. Instead, with Gödel logic, the Boolean interpretation  is always in accordance with the fuzzy truth values (Theorem \ref{th:binarization}):
\[
\mathcal{I}((w_a \land a) \lor (w_b \land b)) {=} \max(\min(0.6,0.7),\min(0.4,0)) {=} 0.6
\]

\section{Disjunctive vs Conjunctive Compilations} \label{app:versus}

Recall that a choice node $[\Phi_1, \dots, \Phi_n]$ can be compiled in two dual ways:
\begin{itemize}
  \item Disjunctive compilation: $\ \bigvee_{i=1}^n (w_i \wedge \Phi_i)$
  \item Conjunctive compilation: $\ \bigwedge_{i=1}^n (\lnot w_i \lor \Phi_i)$
\end{itemize}

Duality comes from De Morgan's laws: the negation of the disjunctive compilation is equivalent to the conjunctive compilation on the negated subformulas, and vice versa. Indeed,
\[
\neg \bigvee_{i=1}^n \ w_i \land \Phi_i \equiv  \bigwedge_{i=1}^n \ \neg w_i \lor \neg \Phi_i  \]
and 
\[ \neg \bigwedge_{i=1}^n \ \neg w_i \lor \Phi_i \equiv \bigvee_{i=1}^n \ w_i \land \neg \Phi_i  
\]

Under Gödel semantics, conjunction and disjunction are interpreted as $\min$ and $\max$ respectively, so the two translations become
\begin{align*}
\text{Disj.}: \quad & \max_i \min(w_i, \Phi_i) \\
\text{Conj.}: \quad & \min_i \max(1 - w_i, \Phi_i)
\end{align*}

\paragraph{Case Study: Selecting Reliable Rules.} Equation \eqref{eq:select_rules} proposes the following LoH model, which is also used inside conjunctive neurons of the type in \eqref{eq:neurons}:
\[
\bigwedge_{i=1}^n [r_i,\, \top] \tag{3}
\]
Notice that the two weights in the compilation of a choice operator choosing only among two subformulas must sum to one.\footnote{Indeed, let $z_1$ and $z_2$ be the stored parameters. By design, the weights $w_1$ and $w_2$ are obtained applying the sigmoid function to $z_1 - \frac{z_1+z_2}{2} =  \frac{z_1-z_2}{2}$ and $z_2 - \frac{z_1+z_2}{2} =  \frac{z_2-z_1}{2}$ (respectively). The two logit values are opposite to each other, so the weights $w_1 = \sigma(\frac{z_1-z_2}{2T})$ and $w_2 = \sigma(\frac{z_2-z_1}{2T})$ must sum to $1$.} Hence, we will write $w_i$ and $1-w_i$ for the (respective) weights of $r_i$ and $\top$ in $[r_i,\, \top]$.

With conjunctive compilation, %each bracket yields $\max(1 - w_i, r_i)$. Since $\text{True} = 1$, the constant branch disappears, giving:
\eqref{eq:select_rules} yields:
\begin{multline*}
 \min_i (\min(\max(1 - w_i, r_i), \max(w_i, \top))) = \\ = \min_i (\max(1 - w_i, r_i)) 
\end{multline*}
%If $r_i$ is \emph{kept} ($w_i > \tfrac{1}{2}$), it contributes $r_i$; otherwise it contributes 1—matching the intended semantics: “select any subset of rules.”
On the other hand, with disjunctive compilation, it becomes:
\begin{multline*}
\min_i (\max(\min(w_i, r_i), \min(1-w_i, \top))) = \\ = \min_i (\max(\min(w_i, r_i), 1-w_i))
\end{multline*}
%Even if $r_i = 0$, $w_i' \ge \tfrac{1}{2}$ for discarded rules, so each clause evaluates to at least 0.5—\textbf{blinding the outer conjunction}. The selector can no longer deactivate a faulty rule.

Thus, \eqref{eq:select_rules} simplifies more under conjunctive compilation. This simplification has an important effect on the training dynamics. Indeed, when a rule $r_i$ is already (almost) satisfied for a specific example (i.e., $r_i\approx1$), also the inner $\max(1-w_i,r_i)$ becomes $\approx1$. Hence, the outer $\min_i$ ignores that index and concentrates on a rule whose truth value is \emph{smaller} (if any). Then, it is the weight associated to that less-satisfied rule that will receive gradient signal for updating. Intuitively, when particular data gives no evidence for preferring $r_i$ over $\top$ (because their truth values are the same), the network also receives no signal to change the associated gate $w_i$.
By contrast, $\max(\min(w_i,r_i),1-w_i)$  evaluates to $\max(w_i,1-w_i)$ when $r_i\approx 1$. Consequently, the loss may push $w_i$ upwards or downwards even when the data provide no information for choosing between $r_i$ and $\top$. These unwanted updates can slow convergence and may bias the learned subset of rules. %Hence, the conjunctive compilation’s simplification is valuable: it automatically masks weight updates whenever the corresponding rule already fits the current sample, directing learning to the truly uncertain choices and yielding faster, cleaner training dynamics

\paragraph{Empirical Evaluation.} We conducted some experiments validating the previous findings and extending to different LoH models. The experiments are conducted in the following way. We consider $10$ propositional variables and randomly generate some ground-truth clauses and some additional clauses, of width ranging from $2$ to $5$. For any of the $2^{10}$ possible Boolean interpretations, a label is produced using the ground-truth clauses as rules. Then the dataset is divided into $75\%$ for training and $25\%$ for evaluation. An LoH model is trained to select some rules among the ground-truth + additional rules. For any considered number of ground-truth clauses and additional clauses, the same experiment is repeated $10$ times. 
For each execution, the test-set F$1$ score is recorded, together with the number of optimization steps to convergence. The criterion we use for deciding convergence is the following: either $100\%$ accuracy is achieved, or there is no change in accuracy for $64$ consecutive steps, or a limit of $64$ epochs ($384$ steps) is reached.
The values of the hyperparameters were fixed: $128$ as batch size, $0.15$ as learning rate, $1$ as temperature, and $Gumbel(0,1)$ noise.

Figure \ref{fig:conj-vs-disj} reports the plots with the experiments' results for different models. Subfigure \ref{fig:conj-vs-disj-cnf} refers to simple rule selection, with LoH models as in \eqref{eq:select_rules}. %The number of ground-truth clauses took values $5$, $10$, $15$, $20$ and that of additional clauses $10$, $17$, $24$, $31$. 
Subfigure \ref{fig:conj-vs-disj-dnf} does the same, but in the dual set-up: both ground-truth and additional clauses $c_i$ are \emph{conjunctive} clauses, and the LoH model $\bigvee_{i=1}^n [c_i,\, \bot]$ learns a disjunction of them. 
Finally, Subfigure \ref{fig:conj-vs-disj-fixed} considers again rules made of (disjunctive) clauses. However, each of the $m$ ground-truth clauses is placed on a different set $\{r_{i,1},\,r_{i,2},\dots,r_{i,k{+}1}\}$, together with $k$ additional clauses. Then, we use the model in \eqref{eq:rules_fixed}, selecting one rule per set. 

These plots corroborate our suggestion to use conjunctive compilation inside conjunctions (Subfigures \ref{fig:conj-vs-disj-cnf} and \ref{fig:conj-vs-disj-fixed}), and disjunctive compilation inside disjunctions (Subfigure \ref{fig:conj-vs-disj-dnf}). Indeed, such compilation choices achieved better results both in terms of final accuracy and of convergence speed.

\begin{figure}[ht]
\centering
\begin{subfigure}{\linewidth}
    \centering
    \includegraphics[width=\textwidth]{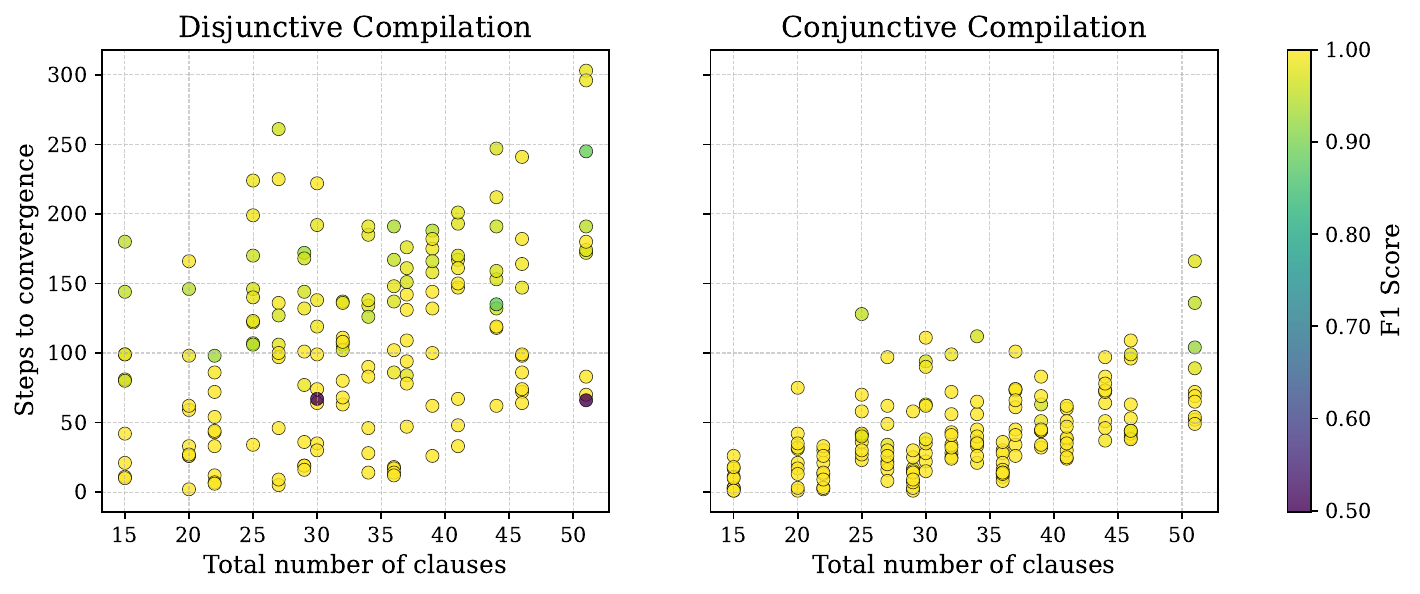}
    \caption{With LoH model $\bigwedge_{i=1}^n [r_i,\, \top]$}
    \label{fig:conj-vs-disj-cnf}
  \end{subfigure}

  \vspace{1em} 

\begin{subfigure}{\linewidth}
    \centering
    \includegraphics[width=\textwidth]{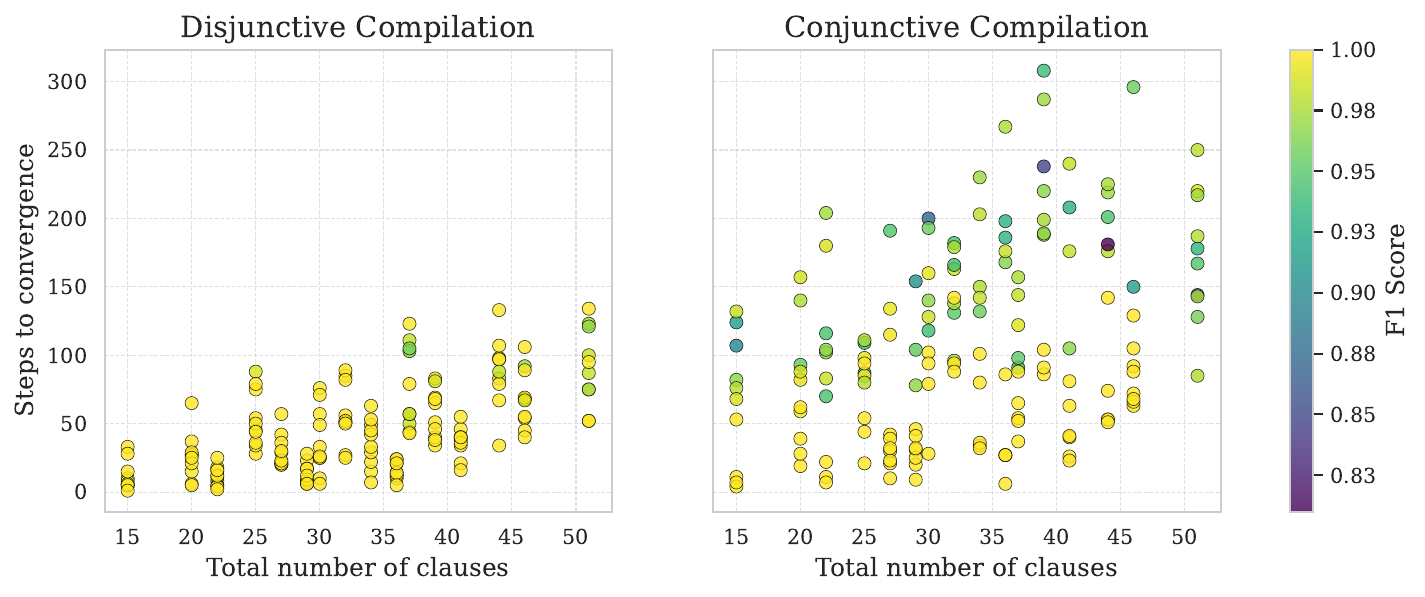}
    \caption{With LoH model $\bigvee_{i=1}^n [c_i,\, \bot]$}
    \label{fig:conj-vs-disj-dnf}
  \end{subfigure}

  \vspace{1em} 

\begin{subfigure}{\linewidth}
    \centering
    \includegraphics[width=\textwidth]{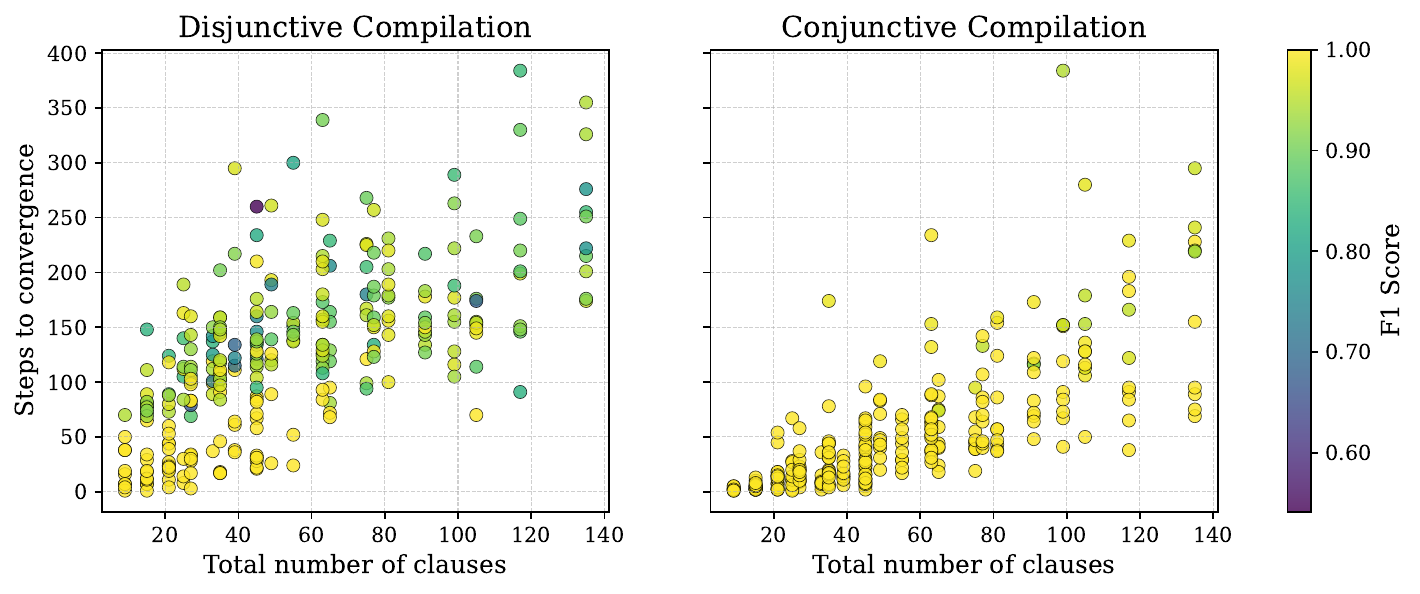}
    \caption{With LoH model $\bigwedge_{i=1}^{m} [r_{i,1},\,r_{i,2},\dots,r_{i,k{+}1}]$}
    \label{fig:conj-vs-disj-fixed}
  \end{subfigure}
\caption{Comparison of disjunctive and conjunctive compilations.}\label{fig:conj-vs-disj}
\end{figure}

%\subsection*{B.3\quad Practical Guidelines}

%\begin{itemize}
%  \item \textbf{Inside a conjunction}: use conjunctive compilation.
%  \item \textbf{Inside a disjunction}: use disjunctive compilation.
%  \item \textbf{Under negation:} flip the choice (by De Morgan).
%\end{itemize}

%Following these rules ensures the compiled network remains faithful to the intended symbolic semantics, and unused options are rendered semantically inert.

\section{Further Experiments with Artificial Data}

% \redNote{Paragrafo tolto che ripeteva l'appendice B.}
%We performed experiments on artificial data for different LoH models discussed in Section \ref{sec:unifying}. The outline is already introduced in Appendix \ref{app:versus}. In particular, we consider $10$ propositional variables and randomly generate a ground-truth formula and some additional subformulas. For any of the $2^{10}$ possible Boolean interpretations, a label is produced using the ground-truth formula. $75\%$ of this data is used for training; the remaining $25\%$ for evaluation. A LoH model is trained to select some subformulas from those of the ground-truth plus the additional ones. For each execution, the test-set F$1$ score is recorded after every optimization step. We use early-stopping (as discussed in Appendix \ref{app:versus}), and the hyperparameters values are fixed: $128$ as batch size, $0.15$ as learning rate, $1$ as temperature, and $Gumbel(0,1)$ noise. Finally, we use conjunctive compilation inside conjunctions, and disjunctive compilation inside disjunctions, as we suggested. 

%\todo{Figure \ref{fig:3plots}  and Figure \ref{fig:definite_clauses}.}
Figure~\ref{fig:3plots} reprises the experiments of Appendix~\ref{app:versus}.  However, it considers only the best compilations (disjunctive for \ref{fig:dnf}, and conjunctive for \ref{fig:cnf} and \ref{fig:fixed}), while  highlighting the influence of different experimental factors. In particular, the middle column shows how both F$1$ score and convergence speed have a strong negative correlation with the number of ground-truth clauses. Clearly, the more clauses in the ground truth, the more need to be selected for a perfect score, and the more difficult the learning. However, since the ground-truth clauses are independently generated, their number is also strongly correlated with data imbalance. %: positively for the disjunction of conjunctive clauses (Subfigure \ref{fig:dnf}) and negatively otherwise (Subfigures \ref{fig:cnf} and \ref{fig:fixed}). 
As shown in the first column, learning a conjunction of clauses suffers most when there are too few positive samples (i.e., samples in which the ground-truth formula is satisfied). Conversely, a shortage of negative samples drives poorer performance when learning a disjunction.
In contrast, the number of additional, ``misleading'' clauses is not as impactful as the ground-truth, as evidenced by the third column. This is true at least when the number of additional clauses is comparable to that of the ground truth.\footnote{Regarding Subfigure \ref{fig:fixed}, notice that the \emph{total} number of added clauses is a \emph{multiple} of the number of additional clauses \emph{per set}.}

\begin{figure}[ht]
\centering
\begin{subfigure}{\linewidth}
    \centering
    \includegraphics[width=\textwidth]{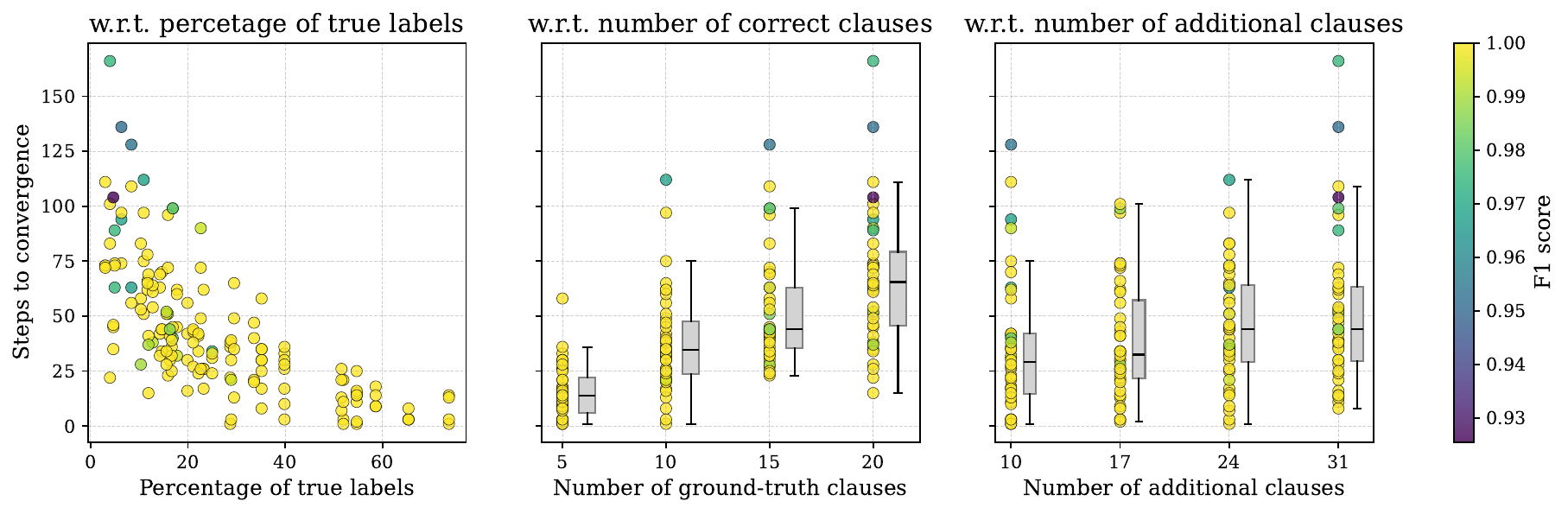}
    \caption{With LoH model $\bigwedge_{i=1}^n [r_i,\, \top]$}
    \label{fig:cnf}
  \end{subfigure}

  \vspace{1em} 

\begin{subfigure}{\linewidth}
    \centering
    \includegraphics[width=\textwidth]{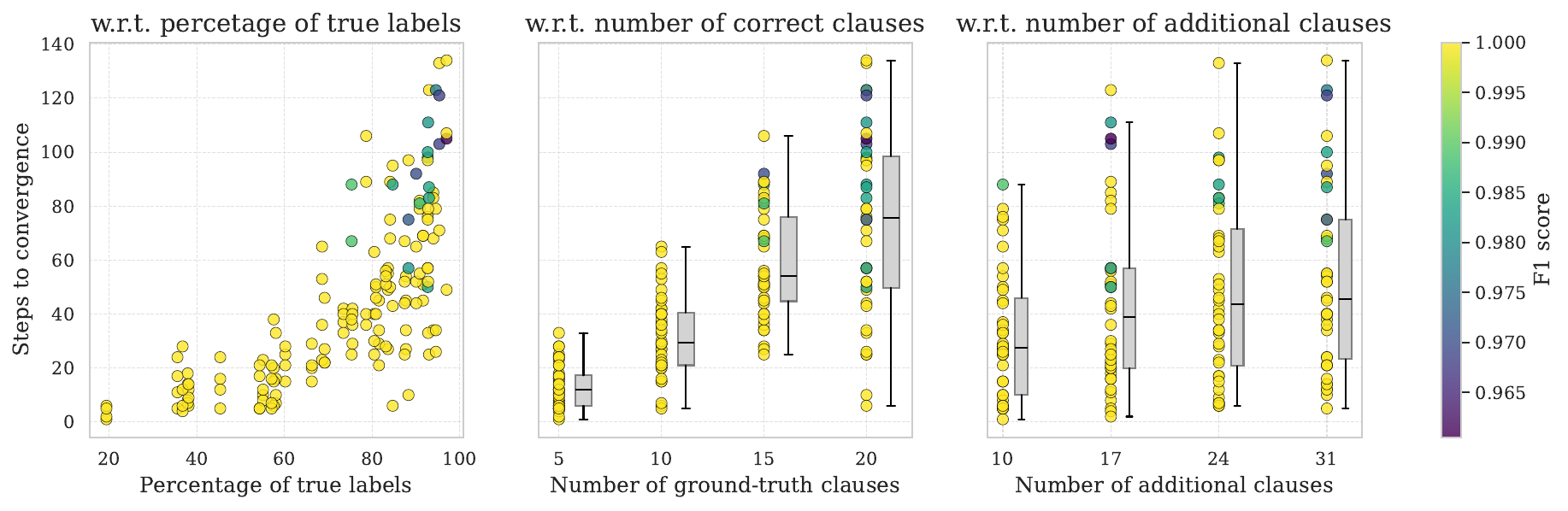}
    \caption{With LoH model $\bigvee_{i=1}^n [c_i,\, \bot]$}
    \label{fig:dnf}
  \end{subfigure}

  \vspace{1em} 

\begin{subfigure}{\linewidth}
    \centering
    \includegraphics[width=\textwidth]{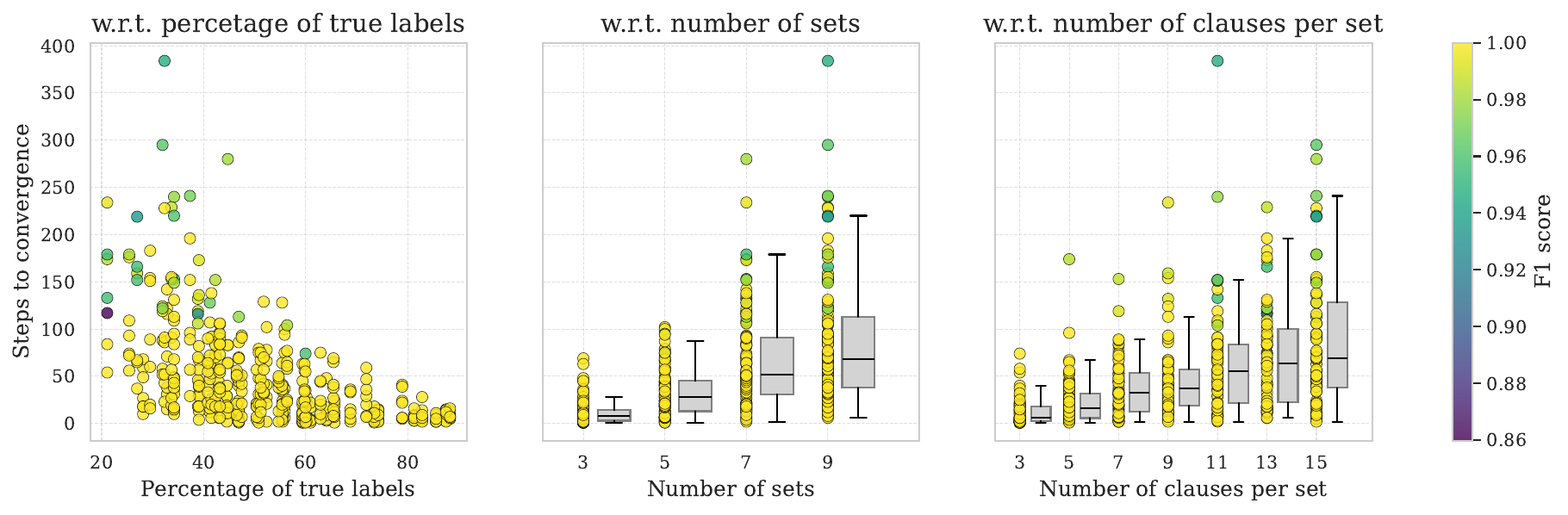}
    \caption{With LoH model $\bigwedge_{i=1}^{m} [r_{i,1},\,r_{i,2},\dots,r_{i,k{+}1}]$}
    \label{fig:fixed}
  \end{subfigure}
\caption{Clauses selection performance w.r.t. percentage of true labels, number of ground-truth clauses and number of additional clauses.}\label{fig:3plots}
\end{figure}

\begin{comment}
\begin{figure}
    \centering
    \includegraphics[width=0.6\linewidth]{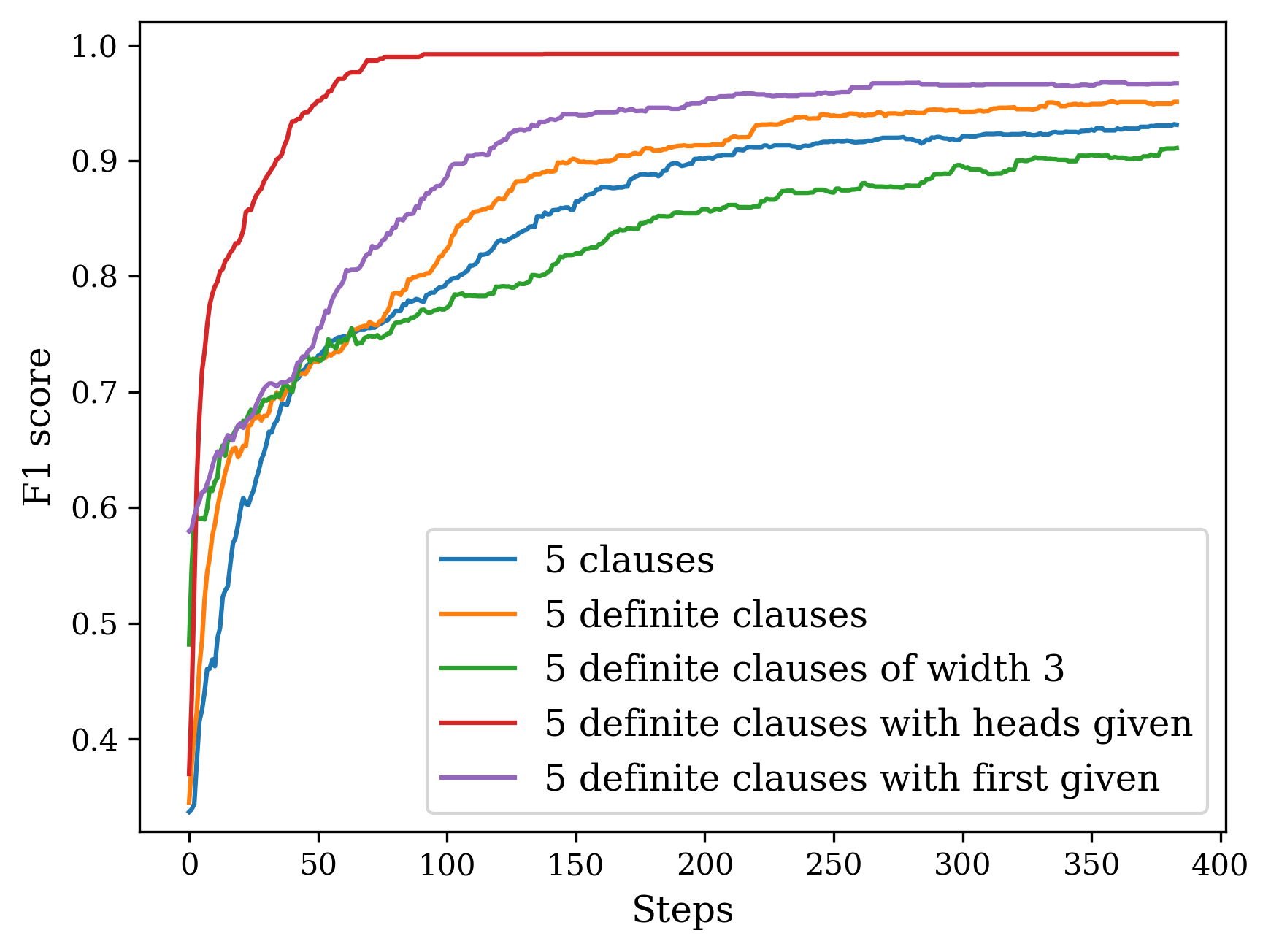}
    \caption{Average training curves---on the same data---of LoH formulas following different templates (Equations \eqref{eq:5_definite}, \eqref{eq:definite_clauses2} and \eqref{eq:5_clauses}).}
    \label{fig:definite_clauses}
\end{figure}
\end{comment}

\section{Wildfire Risk Assessment Task Details}\label{app:wildfire}

%\paragraph{Variables and ground–truth.}
%The task is defined over nine propositional variables. Two of them are \emph{visual} concepts, to be formed by a CNN from the images: $\DenseForest$ and $\DryVegetation$. The remaining seven are \emph{non–visual} features provided in tabular form: $\LowHumidity$, $\HighTemperature$, $\RainedRecently$, $\StrongWind$, $\LightningsFrequent$, $\LowHumanActivity$ and $\PowerLinesNearby$.
%The ground–truth label $\WildFireRisk$ is obtained by combining three interpretable intermediate rules: see equations \ref{eq:wildfire}.

\paragraph{Dataset generation.}
The dataset contains 2048 synthetic RGB images of size
$256\times 256$. For each image, we first sample Boolean assignments to the visual concepts ($\DenseForest$ and $\DryVegetation$), deciding whether the scene should contain dense forest and whether the vegetation should be dry. A simple drawing routine then renders forest-like patches, and the color scheme distinguish greener from dry vegetation. The image generation process may also produce additional elements, such as houses or rivers, that are random artifact not correlated with the wildfire risk. %Examples of the generated images are available in Figure \ref{fig:wildfire_images}.

\begin{comment}
\begin{figure}[t]
  \centering
  % Dense forest, NOT dry vegetation
  \begin{subfigure}[b]{0.23\textwidth}
    \centering
    \includegraphics[width=\linewidth]{Plots/image_0004.png}
    \caption{\small $\DenseForest, \neg \DryVegetation$}
  \end{subfigure}\hfill
  % Dense forest, dry vegetation
  \begin{subfigure}[b]{0.23\textwidth}
    \centering
    \includegraphics[width=\linewidth]{Plots/image_0001.png}
    \caption{\small $\DenseForest, \DryVegetation$}
  \end{subfigure}%\hfill
  \\
  % No dense forest, NOT dry vegetation
  \begin{subfigure}[b]{0.23\textwidth}
    \centering
    \includegraphics[width=\linewidth]{Plots/image_0003.png}
    \caption{\small $\neg \DenseForest, \neg \DryVegetation$}
  \end{subfigure}\hfill 
  % No dense forest, dry vegetation
  \begin{subfigure}[b]{0.23\textwidth}
    \centering
    \includegraphics[width=\linewidth]{Plots/image_0000.png}
    \caption{\small $\neg \DenseForest, \DryVegetation$}
  \end{subfigure}

  \caption{Examples of synthetic wildfire scenes for the four possible combinations of $\DenseForest$ and $\DryVegetation$.} \label{fig:wildfire_images}
\end{figure}
\end{comment}

Independently, we sample Boolean values for the seven non-visual features
($\LowHumidity$, $\PowerLinesNearby$, etc.), as iid $Bernoulli(0.5)$ random variables. The ground-truth wildfire-risk label for each sample is obtained by evaluating Equation~\eqref{eq:ground_truth} on the full 9-dimensional Boolean vector. The images,  the non-visual features and the risk labels are stored in a PyTorch dataset and split into training and test sets (75\%/25\%).

\paragraph{Model Architectures and training.}
The neurosymbolic models integrate a perception and a reasoning module, which are trained end-to-end:
\begin{itemize}
    \item \textbf{Perception Module (CNN):} A three-layer Convolutional Neural Network processes $3 \times 256 \times 256$ RGB images to extract the visual predicates $\DenseForest$ and $\DryVegetation$. Each block consists of a convolution (channels: $3 \to 16 \to 16 \to 16$), ReLU activation, MaxPool ($2 \times 2$), and Dropout ($p=0.15$). A final classification head outputs fuzzy values for the two visual concepts, through a $sigmoid$ activation.
    \item \textbf{Reasoning Module (LoH):} This layer accepts the visual predictions alongside the non-visual boolean features. It comes in the four different knowledge regimes discussed in Section~\ref{sec:unifying}.
\end{itemize}

Training uses standard binary cross–entropy on the wildfire label, with Adam optimizers for both the logical and perceptual parts. The CNN learning rate is $8\cdot 10^{-4}$ and the LoH learning rate is $8\cdot 10^{-2}$. For each knowledge regime, the training is performed $20$ times with different random seeds. Analogously for $\partial$ILP.

\section{Comparison of Different Templates}\label{app:ablation}
Let us consider the following ground-truth CNF formula $\phi$ made of $5$ definite clauses of width $3$:
\begin{multline*}
(\neg v_3 \lor \neg v_8 \lor v_7) \land (\neg v_{10} \lor \neg v_3 \lor v_4) \land (\neg v_1 \lor \neg v_9 \lor v_{10}) \\ \land (\neg v_2 \lor \neg v_6 \lor v_8) \land (\neg v_4 \lor \neg v_3 \lor v_5)
\end{multline*}
For any of the $2^{10}$ possible Boolean interpretations of the propositional variables $v_1,\dots,v_{10}$, a ground-truth label is produced using $\phi$. This dataset is divided into $75\%$ for training and $25\%$ for evaluation, and each LoH model is trained to construct $5$ clauses.
The values of the hyperparameters  are fixed: $128$ as batch size, $0.15$ as learning rate, $1$ as temperature, and $Gumbel(0,1)$ noise.
Figure~\ref{fig:definite_clauses} compares the average learning curves of LoH models adhering to different templates:
\newcommand{\tland}{\mathrel{\mkern-5mu\land\mkern-5mu}}
\begin{itemize}
\item ``5 clauses'' conjoins five copies of
%\begin{equation}\label{eq:5_clauses}
$    \bigvee_{i=1}^{10} [\neg v_i, v_i, \bot] $ %\;\lor\; \bigvee_{i=1}^{10} [v_i, \bot] 
%\end{equation}
\item ``5 definite clauses'' uses  the conjunction of five copies of formula \eqref{eq:definite_clauses} with $n{=}10$, i.e.,
%\begin{equation}\label{eq:definite_clauses2}
$    \bigvee_{i=1}^{10} [\neg v_i, \bot]
    \;\lor\; 
    [v_1,\dots,v_{10}] $
%\end{equation}
\item ``5 definite clauses of width $3$'' uses the conjunction of five copies of
%\begin{equation}\label{eq:5_definite}
$  [\neg v_1, \dots, \neg v_{10}] \lor [\neg v_1, \dots, \neg v_{10}] \lor [v_1, \dots, v_{10}] $
%\end{equation}
\item ``5 definite clauses with heads given'' uses
$(\bigvee_{i=1}^{10} [\neg v_i, \bot] \lor v_7) \tland  (\bigvee_{i=1}^{10} [\neg v_i, \bot] \lor v_4) \tland (\bigvee_{i=1}^{10} [\neg v_i, \bot] \lor v_{10}) \tland (\bigvee_{i=1}^{10} [\neg v_i, \bot] \lor v_8) \tland (\bigvee_{i=1}^{10} [\neg v_i, \bot] \lor v_5)$ %\;
\item ``5 definite clauses with first given'' conjoins $\neg v_3 {\lor} \neg v_8 {\lor} v_7$ with four copies of $\bigvee_{i=1}^{10} [\neg v_i, \bot]
    \lor
    [v_1,\dots,v_{10}] $
\end{itemize}

%\begin{wrapfigure}{l}{0.48\textwidth}
\begin{figure}[ht]
    \centering
    \includegraphics[width=0.92\linewidth]{Plots/fig_definite_clauses.png}
    \caption{Average training curves, over 20 runs, of LoH formulas following different templates, for learning the same ground-truth.}
    \label{fig:definite_clauses}
\end{figure}
%\end{wrapfigure}

We can notice that adding explicit knowledge (fixing either a clause or the clause heads) yield better learning curves than the purely syntactic alternatives.
Regarding such three purely syntactic templates, the model learning definite clauses slightly outperformed the most general one, thanks to a reduced search space.   
However, despite having an even smaller hypothesis space, the model learning definite clauses of width $3$ is the one performing worse. This may be explained by the fact that the other models can update the weights of the negative literals independently, whereas updates in $[\neg v_1, \dots, \neg v_{10}]$ always involve more variables at the same time, or because the final formula is invariant to permuting the three chosen literals, but the parameterization is not. 
%or because of the asymmetry between choosing $\neg v_i$ in the first choice operator and $\neg v_j$ in the second (with $i\neq j$) and the equivalent choice of $\neg v_j$ in the first and $\neg v_i$ in the second. 
Anyway, one may still employ this model because it guarantees formulas of a prescribed template, irrespective of raw predictive performance. 
Moreover, among the $20$ runs, each of the LoH models found the $100\%$-correct formula multiple times. This suggest that trying different runs and picking the best can be a useful strategy.

\section{Datasets Properties \& Runtimes}\label{app:datasets}

For each tabular dataset, Table \ref{tab:datasets} summarizes the number of features before and after binarization, together with references and size.

\begin{table*}[!ht]
\centering
\caption{Datasets properties.}\label{tab:datasets}
\begin{tabular}{lrrrr}
\toprule
\textbf{Dataset} & \textbf{\# instances} & \textbf{\# classes} & \textbf{\makecell{\# original \\ features}} & \textbf{\makecell{\# binary \\ features}} \\
\midrule
adult  \cite{adult}       & 32561 & 2  & 14 & 155 \\
bank-marketing \cite{bank_marketing} & 45211 & 2  & 16 & 88  \\
banknote  \cite{banknote}    & 1372  & 2  & 4  & 17  \\
blogger  \cite{blogger}     & 100   & 2  & 5  & 15  \\
chess   \cite{chess}      & 28056 & 18 & 6  & 40  \\
connect-4  \cite{connect_4}   & 67557 & 3  & 42 & 126 \\
letRecog  \cite{letRecog}    & 20000 & 26 & 16 & 155 \\
magic04  \cite{magic}     & 19020 & 2  & 10 & 79  \\
mushroom   \cite{mushroom}   & 8124  & 2  & 22 & 117 \\
nursery   \cite{nursery_76}    & 12960 & 5  & 8  & 27  \\
tic-tac-toe \cite{tic-tac-toe}  & 958   & 2  & 9  & 27  \\
wine    \cite{wine}      & 178   & 3  & 13 & 37  \\
\bottomrule
\end{tabular}
\end{table*}

%\section{Runtimes}
Table \ref{tab:runtimes} reports the approximated runtime of a single training plus evaluation run, for each benchmark discussed in Section \ref{subsec:classification}. The values are relative to the selected hyperparameters, and the table also reports the corresponding models' parameter count. All experiments were conducted on a cluster node equipped with an Nvidia RTX A5000 with 60GB RAM.

\section{Hyperparameters}
\label{app:hyperparams}
The following outlines the hyperparameter search spaces for the models presented in Section \ref{sec:experiments}. The hyperparameter choices for each dataset are available in the code repository.
\begin{itemize}
    \item Decision Tree: min\_samples\_split (2--50), 
    max\_depth (2--50), 
    min\_samples\_leaf (1--50).
    \item Random Forest: n\_estimators (50--500), min\_samples\_split (2--50), 
    max\_depth (2--50), 
    min\_samples\_leaf (1--50).
    \item XGBoost: max\_depth (5--20),
    n\_estimators (10--500),
    learning\_rate ($10^{-3}$--$2\cdot 10^{-1}$).
    \item Neural Network: number of hidden layers (1--3), number of units per layer (4--128), learning rate ($10^{-4}$--$10^{-1}$). Batch size fixed at 256, and ReLU activations.
    \item DLN: number of hidden layers (1--10), number of units per layer (16--512), grad\_factor ($1.$--$2.$), learning rate ($10^{-3}$--$10^{-1}$), $\tau$ (1--100). Batch size fixed at 128.
    \item MLLP/CRS: number of hidden layers (1,3), number of units per layer (16--256), weight decay ($10^{-8}$--$10^{-2}$), learning rate ($10^{-4}$--$10^{-1}$), random binarization rate (0--0.99). Batch size (128) and learning rate scheduler were set to the default value.
    \item LoH: learning rate ($0.01$--$0.2$), Gumbel noise scale $\beta$ ($0.4$--$1.2$), temperature ($0.4$--$1.2$), and temperature rescaling factor applied every $10$ epochs ($0.9925$--$1$). Like NN and MLLP/CRS, we also optimized the architecture, allowing the TPE algorithm to select the layer type (any-clause vs fixed-size-clauses), the number of hidden layers ($1$--$2$), layer sizes ($16$--$256$), and whether the output layer is conjunctive or disjunctive (with the other layers alternating).  In case of fixed-size-clauses layer type, also the hyperparameter $k$ ($2$--$8$) was tuned for each layer.
\end{itemize}
%In appendix \ref{app:ablation}, the values of the hyperparameters  are fixed: $128$ as batch size, $0.15$ as learning rate, $1$ as temperature, and $Gumbel(0,1)$ noise.

\begin{table*}[ht]
    \centering
    \caption{Average runtimes relative to a single training + evaluation run.}
    \label{tab:runtimes}
    \begin{tabular}{l*{3}{cc}}
    \toprule
    \multicolumn{1}{l}{Dataset} &
    \multicolumn{2}{c}{DLN} &
    \multicolumn{2}{c}{MLLP/CRS} &
    \multicolumn{2}{c}{LoH}\\
    \cmidrule(lr){2-3}\cmidrule(lr){4-5}\cmidrule(lr){6-7}
    & Time (s) & \# Params & Time (s) & \# Params &  Time (s) & \# Params \\
     \midrule
        
        adult & 520 & 57680 & 499 & 74260  & 287 &  17584
        \\ 
        
        bank-marketing &  615 & 38576 & 687 & 70556 &  452 &  81612
        \\
        
        banknote & 17 & 23728 & 14 & 4807 & 12 &  7790
        \\ 
        
        blogger & 4 & 36592 & 4 & 3485 &  6 &  6195
        \\
        
        chess & 956 & 53568 & 970 & 117774 & 546 &  29464
        \\ 
        
        connect-4 & 1214 & 51344 & 650 & 25542 & 873 &  55212
        \\ 
        
        letRecog & 414 & 41280 & 511 & 126302 & 528 &  86518
        \\ 
        
        magic04 & 467 & 50960  &  271 & 19440 & 364 &  31104
        \\ 
        
        mushroom & 110  & 31024 & 78 & 16422 & 67  &  82900
        \\ 
        
        nursery & 284 & 51840 &  302 & 48331 & 311 &  68186
        \\
        
        tic-tac-toe & 25 & 18288 & 38 & 4002 & 32 & 11310 
        \\ 
        
        wine & 2 & 6960 & 5 & 6400 & 4 &  2640
        \\ 

        %Visual Tic-Tac-Toe (DNF) &  &  &  &  
        %\\
        
        %Visual Tic-Tac-Toe (CNF)  &  &  &  &  
        %\\
        \bottomrule
    \end{tabular}
\end{table*}

\section{Decision Rules of Visual Tic-Tac-Toe}
\label{app:visual_ttt}
Both CRS and LoH allow for the automatic extraction of logical formulas. In order to interpret such decision rules, we need to assign proper names to the input propositions. In particular, for visual tic-tac-toe, there are three input propositions for every image in the tic-tac-toe grid, each corresponding to an output unit of the feature-extractor CNN. The assignment of labels (such as $X$, $O$ or $B$) to such units is done by thresholding their average activations with respect to each image class ($0$, $1$, and $2$), in the following way:
\begin{itemize}
    \item if the average activation is never $>0.5$, the label for the unit is $\bot$, which can later be simplified from the formulas;
    \item if the average activation is $>0.5$ for one class and $<0.5$ for the other two, the label for the unit is the one corresponding to the activating class;
    \item if the average activation is $>0.5$ for two classes and $<0.5$ for the other, the label for the unit is the logical negation of the non-activating class;
    \item if the average activation is always $>0.5$, the label for the unit is $\top$, which can later be simplified from the formulas.
\end{itemize}
As an example, if an output unit of the CNN exhibits an average activation below $0.5$ for images of class $1$ but above $0.5$ for images of the other two classes, we assign it the label $\neg O$ (recalling that digit $1$ was associated with $O$). 
In this way, we can assign names to the input propositions of the logical models, by combining the labels of the CNN output units with the positions in the $3 \times 3$ tic-tac-toe grid. 

Table \ref{tab:visualttt_formulas} reports the \emph{best} decision rules learned by CRS and LoH on the Visual Tic-Tac-Toe task. 
The formulas were simplified removing the appearances of $\top$ and $\bot$, and also removing redundant clauses.
We do not provide the formulas learned by DLN because the implementation of DLN we used did not have a function to write them in a human-readable way. Moreover, to boost performance, DLN actually learns an ensemble with majority voting, not a single formula. The DLN run with highest Symbolic eval achieved $.885$ F$1$-score.

\begin{table*}[hb]
  \centering
  \caption{Best decision rules learned on the Visual Tic-Tac-Toe task. The labels $X_i$ and $\neg O_j$ were assigned post-hoc to each output of the trained CNN in the way explained in appendix \ref{app:visual_ttt}.} \label{tab:visualttt_formulas}
  \begin{tabular}{llcc}
    \toprule
    Model &  & Symb. eval & Formula \\
    \midrule
    \multirow{2}[1]{*}{CRS} 
        & DNF & .991 &    \small{\makecell{$(\neg O_1 \land \neg O_5 \land \neg O_9) \lor (\neg O_3 \land \neg O_5 \land \neg O_7)$ \\ $\lor (\neg O_2 \land \neg O_4 \land \neg O_7 \land \neg O_9) \lor (\neg O_1 \land \neg O_2 \land \neg O_3 \land \neg O_4 \land \neg O_7)$ \\ $\lor (\neg O_1 \land \neg O_2 \land \neg O_3 \land \neg O_4 \land \neg O_8) \lor (\neg O_1 \land \neg O_2 \land \neg O_3 \land \neg O_4 \land \neg O_9)$ \\ $\lor (\neg O_1 \land \neg O_2 \land \neg O_3 \land \neg O_5 \land \neg O_8) \lor (\neg O_1 \land \neg O_2 \land \neg O_3 \land \neg O_6 \land \neg O_7)$ \\ $\lor (\neg O_1 \land \neg O_2 \land \neg O_3 \land \neg O_6 \land \neg O_8) \lor (\neg O_1 \land \neg O_2 \land \neg O_4 \land \neg O_6 \land \neg O_7)$ \\ $\lor (\neg O_1 \land \neg O_3 \land \neg O_4 \land \neg O_7 \land \neg O_8) \lor (\neg O_1 \land \neg O_3 \land \neg O_6 \land \neg O_8 \land \neg O_9)$ \\ $\lor (\neg O_1 \land \neg O_4 \land \neg O_5 \land \neg O_6 \land \neg O_7) \lor (\neg O_1 \land \neg O_4 \land \neg O_5 \land \neg O_6 \land \neg O_8)$ \\ $\lor (\neg O_1 \land \neg O_4 \land \neg O_7 \land \neg O_8 \land \neg O_9) \lor (\neg O_1 \land \neg O_6 \land \neg O_7 \land \neg O_8 \land \neg O_9)$ \\ $\lor (\neg O_2 \land \neg O_3 \land \neg O_4 \land \neg O_5 \land \neg O_8) \lor (\neg O_2 \land \neg O_3 \land \neg O_4 \land \neg O_6 \land \neg O_9)$ \\ $\lor (\neg O_2 \land \neg O_3 \land \neg O_6 \land \neg O_7 \land \neg O_9) \lor (\neg O_2 \land \neg O_4 \land \neg O_5 \land \neg O_6 \land \neg O_8)$ \\ $\lor (\neg O_2 \land \neg O_4 \land \neg O_5 \land \neg O_6 \land \neg O_9) \lor (\neg O_2 \land \neg O_4 \land \neg O_5 \land \neg O_8 \land \neg O_9)$ \\ $\lor (\neg O_2 \land \neg O_5 \land \neg O_6 \land \neg O_7 \land \neg O_8) \lor (\neg O_2 \land \neg O_5 \land \neg O_7 \land \neg O_8 \land \neg O_9)$ \\ $\lor (\neg O_3 \land \neg O_4 \land \neg O_5 \land \neg O_6 \land \neg O_8) \lor (\neg O_3 \land \neg O_4 \land \neg O_5 \land \neg O_6 \land \neg O_9)$ \\ $\lor (\neg O_3 \land \neg O_4 \land \neg O_6 \land \neg O_8 \land \neg O_9) \lor (\neg O_3 \land \neg O_4 \land \neg O_7 \land \neg O_8 \land \neg O_9)$ \\ $\lor (\neg O_3 \land \neg O_6 \land \neg O_7 \land \neg O_8 \land \neg O_9)$
        }} \\
        \cline{2-4}
        \addlinespace[2pt]
        & CNF & .984  &  \small{\makecell{$(\neg O_1 \lor \neg O_2 \lor \neg O_3) \land (\neg O_1 \lor \neg O_4 \lor \neg O_7)$ \\$\land (\neg O_1 \lor \neg O_5 \lor \neg O_9) \land (\neg O_2 \lor \neg O_5 \lor \neg O_8)$ \\ $\land (\neg O_3 \lor \neg O_5 \lor \neg O_7) \land (\neg O_3 \lor \neg O_6 \lor \neg O_9)$ \\ $\land (\neg O_4 \lor \neg O_5 \lor \neg O_6) \land (\neg O_7 \lor \neg O_8 \lor \neg O_9)$ \\ $\land (\neg O_2 \lor \neg O_3 \lor \neg O_4 \lor \neg O_9) \land (\neg O_2 \lor \neg O_5 \lor \neg O_7 \lor \neg O_9)$}}\\ 
    \midrule
    \multirow{2}[11]{*}{LoH} 
        & DNF & 1.00 &   \small{\makecell{$(X_1 \land X_2 \land X_3) \lor (X_4 \land X_5 \land X_6)$ \\ $\lor (X_7 \land X_8 \land X_9) \lor(X_1 \land X_4 \land X_7)$  \\ $\lor (X_2 \land X_5 \land X_8) \lor  (X_3 \land X_6 \land X_9)$  \\ $\lor (X_1 \land X_5 \land X_9) \lor (X_3 \land X_5 \land X_7)$}}\\
        \cline{2-4}
        \addlinespace[2pt]
        & CNF & .999 & \small{\makecell{$(X_1 \lor X_5 \lor X_9) \land (X_3 \lor X_5 \lor X_7)$ \\ $\land (X_1 \lor X_2 \lor X_6 \lor X_7) \land (X_1 \lor X_3 \lor X_4 \lor X_8)$ \\ $\land (X_1 \lor X_3 \lor X_5 \lor X_8) \land (X_1 \lor X_3 \lor X_6 \lor X_8)$ \\ $\land (X_1 \lor X_5 \lor X_6 \lor X_7) \land (X_1 \lor X_6 \lor X_7 \lor X_8)$ \\ $\land (X_2 \lor X_3 \lor X_4 \lor X_9) \land (X_2 \lor X_4 \lor X_5 \lor X_9)$ \\ $\land (X_2 \lor X_4 \lor X_7 \lor X_9) \land (X_2 \lor X_5 \lor X_6 \lor X_7)$ \\ $\land (X_2 \lor X_5 \lor X_7 \lor X_9) \land (X_2 \lor X_6 \lor X_7 \lor X_9)$ \\ $\land (X_3 \lor X_4 \lor X_5 \lor X_8) \land (X_3 \lor X_4 \lor X_5 \lor X_9)$ \\ $\land (X_3 \lor X_4 \lor X_8 \lor X_9) \land (X_1 \lor X_2 \lor X_3 \lor X_4 \lor X_7)$ \\ $\land (X_1 \lor X_2 \lor X_3 \lor X_6 \lor X_9) \land (X_1 \lor X_2 \lor X_5 \lor X_6 \lor X_8)$ \\ $\land (X_1 \lor X_4 \lor X_5 \lor X_6 \lor X_8) \land (X_1 \lor X_4 \lor X_7 \lor X_8 \lor X_9)$ \\ $ \land (X_2 \lor X_4 \lor X_5 \lor X_6 \lor X_8) \land (X_3 \lor X_6 \lor X_7 \lor X_8 \lor X_9)$}} \\
    \bottomrule
  \end{tabular}
\end{table*}

\fi

\end{document}